\documentclass[letterpaper]{article} 
\usepackage[preprint]{aaai2027}  
\usepackage[hyphens]{url}  
\usepackage{graphicx} 
\usepackage{natbib}  
\usepackage{caption} 
\usepackage{mathtools} 
\usepackage{amsthm} 
\usepackage{booktabs} 
\usepackage{tikz} 

\usepackage[utf8]{inputenc} 
\usepackage[T1]{fontenc}    
\usepackage{algorithm}
\usepackage{algpseudocode}
\usepackage{url}            
\usepackage{booktabs}       
\usepackage{amsfonts}       
\usepackage{nicefrac}       
\usepackage{microtype}      
\usepackage{xcolor}         
\usepackage{romannum}
\usepackage{subcaption}
\usepackage{graphicx}
\usepackage{amsthm,amssymb,mathtools}
\newtheorem{theorem}{Theorem}
\newtheorem{lemma}[theorem]{Lemma}
\newtheorem{assumption}{Assumption}

\theoremstyle{remark}
\newtheorem{remark}{Remark}
\usepackage{amsmath}

\usepackage{newfloat}
\usepackage{listings}
\DeclareCaptionStyle{ruled}{labelfont=normalfont,labelsep=colon,strut=off} 
\floatstyle{ruled}
\newfloat{listing}{tb}{lst}{}
\floatname{listing}{Listing}

\usepackage{booktabs}

\title{Active Preference Learning over Latent Preference Archetypes for Many-Objective Bayesian Optimization}
\author{
    Manisha Dubey \corresponding \textsuperscript{\rm 1,2}\thanks{This work was carried out while the author was at University of Manchester, UK}\\
    Sebastiaan De Peuter\textsuperscript{\rm 3,4}\thanks{This work was carried out while the author was at Aalto University, Finland}\\
    Wanrong Wang\textsuperscript{\rm 1},
    Samuel Kaski\textsuperscript{\rm 1,3,5}
}
\affiliations{
      \textsuperscript{\rm 1}    Department of Computer Science, University of Manchester, UK\\
    \textsuperscript{\rm 2}    Centre for AI in Assistive Autonomy, University of Edinburgh, Scotland, UK\\
     \textsuperscript{\rm 3} Department of Computer Science, Aalto University, Espoo, Finland\\
     \textsuperscript{\rm 4}  Informatics Institute, University of Amsterdam, Amsterdam, Netherlands\\
     \textsuperscript{\rm 5} ELLIS Institute Finland, Helsinki, Finland\\
  }

\begin{document}

\maketitle

\begin{abstract}
    Preference-based many-objective Bayesian optimization typically assumes that all pairwise comparisons arise from a single latent utility function, despite real decision makers often exhibiting multiple latent preference archetypes across contexts. We propose an active preference learning framework for many-objective Bayesian optimization that infers latent preference archetypes from pairwise comparisons, enabling both the identification of the active trade-off strategy and the refinement of its associated preferences. Our framework represents preferences as a Dirichlet-process mixture of latent archetypes and introduces mixture-aware information-theoretic query strategies that separately target archetype identification and within-archetype refinement through a hybrid acquisition policy. Experiments on synthetic benchmarks and real-world chemical process design case-study consistently outperforms state-of-the-art preference-based Bayesian optimization methods while recovering interpretable latent preference structure beyond conventional single-utility models. The proposed mixture-aware diagnostics further quantify archetype recovery and preference calibration, providing insights that are not captured by optimization performance alone.
\end{abstract}


\section{Introduction}

Real-world applications such as materials and additive manufacturing~\cite{hastings2025accelerated,myung2025multi}, power systems \cite{ah2005evolutionary}, robotics \cite{li2023online,shen2025real}, urban design \cite{liu2023multi}, and recommender systems \cite{jannach2023survey} require optimizing many competing objectives \cite{deb2016multi} under expensive evaluations. Bayesian optimization provides an efficient framework for sequentially identifying high-quality solutions in such settings. As the number of objectives grows (\(L\ge4\)), however, many-objective Bayesian optimization (MaOBO) becomes increasingly challenging due to rapidly expanding Pareto frontiers, weakened dominance relations, and the difficulty of selecting solutions that reflect a decision maker's priorities \cite{ishibuchi2008evolutionary,binois2020kalai,zhang2020random}. While optimization algorithms can efficiently approximate large Pareto sets, the fundamental challenge increasingly shifts from generating candidate solutions to identifying which trade-offs are actually preferred.

Preference-based Bayesian optimization addresses this challenge by incorporating pairwise preference feedback to guide the search towards Pareto-optimal solutions that align with the decision maker's objectives, avoiding the need to approximate the entire Pareto frontier. Existing methods have demonstrated that actively eliciting preferences can substantially improve optimization efficiency \cite{astudillo2020multi,ozaki2024multi,xu2024principled,av2022human}. However, they almost universally assume that all observed comparisons are generated by a single latent utility function. This assumption becomes restrictive in many-objective settings, where users often exhibit latent preference archetypes, for example, prioritizing cost, safety, or sustainability under different circumstances. Compressing these behaviours into a single utility function can blur preference estimates, reduce query efficiency, and ultimately misdirect optimisation towards less relevant regions of the Pareto front.

We address this limitation by introducing a mixture-aware active preference learning framework for many-objective Bayesian optimization that represents heterogeneous decision-maker preferences as latent preference archetypes rather than a single utility function. The inferred posterior over archetypes naturally decomposes preference elicitation into two complementary objectives: identifying the active archetype and refining its associated trade-offs. This leads to inter-archetype, intra-archetype, and hybrid information-theoretic query strategies that actively select informative pairwise comparisons. Combined with many-objective Bayesian optimization, the learned preference posterior guides expensive evaluations towards decision-maker-preferred regions of the Pareto front, enabling more efficient preference-guided optimization. Our contributions can be summarized as follows:

\begin{itemize}

\item We introduce a mixture-aware active preference learning framework for many-objective Bayesian optimization that models heterogeneous decision-maker preferences as latent preference archetypes using a Dirichlet-process mixture, relaxing the single-utility assumption.

\item We develop inter-archetype, intra-archetype, and hybrid information-theoretic query strategies that actively identify latent preference archetypes and refine within-archetype trade-offs from pairwise comparisons.

\item We integrate the inferred preference posterior into the Bayesian optimization loop to guide expensive evaluations towards decision-maker-preferred regions of the Pareto front, and introduce mixture-aware diagnostics for analysing archetype recovery and preference calibration.

\item We demonstrate consistent improvements over state-of-the-art preference-based Bayesian optimization methods on synthetic benchmarks and a real-world chemical process design case study, while recovering interpretable latent preference archetypes beyond conventional single-utility models.

\end{itemize}

\section{Problem Formulation}

We consider many-objective Bayesian optimization (MaO-BO) with \(L \ge 4\) objectives.
Let \(\mathcal X \subset \mathbb R^d\) be a design space and
\(\mathbf f:\mathcal X \to \mathbb R^L\) an unknown vector-valued objective.
Querying a design \(x \in \mathcal X\) returns a (possibly noisy) observation
$
\mathbf y = \mathbf f(x) + \boldsymbol\epsilon$
where $\boldsymbol\epsilon \sim \mathcal N(0,\sigma^2 I)
$
and evaluations are assumed expensive. Many-objective problems typically admit a large set of Pareto-optimal solutions.
In human-in-the-loop settings, however, the goal is not to recover the entire Pareto front,
but to identify solutions that are most desirable to a decision maker (DM). Direct evaluations of the DM's utility are unavailable.
Instead, the learner may query pairwise preferences between previously observed outcomes. Given two outcomes \(\mathbf y_i\) and \(\mathbf y_i'\), 
the decision maker returns a binary comparison 
indicating whether \(\mathbf y_i \succ \mathbf y_i'\). Further, we assume that the DM evaluates outcomes through an unknown utility function.
To model heterogeneous trade-offs, we posit that the DM's utility
is governed by one of \(K\) latent preference modes.
Each mode \(k \in [K]\) is parameterized by a weight vector
\(\mathbf w_k \in \Delta^{L-1}\),
and modes occur with unknown mixture weights
\(\boldsymbol\eta \in \Delta^{K-1}\). This formulation captures either a single DM with context-dependent trade-offs
or a heterogeneous population of DMs.
We denote the scalar utility induced by mode \(k\) as
\(U(\mathbf f(x);\mathbf w_k)\). Our objective is to identify designs that maximize the DM's mixture-expected utility,
\setlength{\abovedisplayskip}{1pt}
\setlength{\belowdisplayskip}{1pt}
\setlength{\abovedisplayshortskip}{1pt}
\setlength{\belowdisplayshortskip}{1pt}
\begin{equation}
x^\star \in \arg\max_{x \in \mathcal X}
\sum_{k=1}^K \eta_k
\, \mathbb E\!\left[ U(\mathbf f(\mathbf{x}); \mathbf w_k) \right],
\end{equation}
where the expectation is taken over observation noise.
The learner must therefore infer the latent mixture parameters
\(\{\eta_k,\mathbf w_k\}_{k=1}^K\) from pairwise feedback
while sequentially selecting designs to evaluate. 

\begin{figure*}[h!]
    \centering
    \includegraphics[width=0.79\textwidth,
    height=0.73\textheight,
    keepaspectratio]{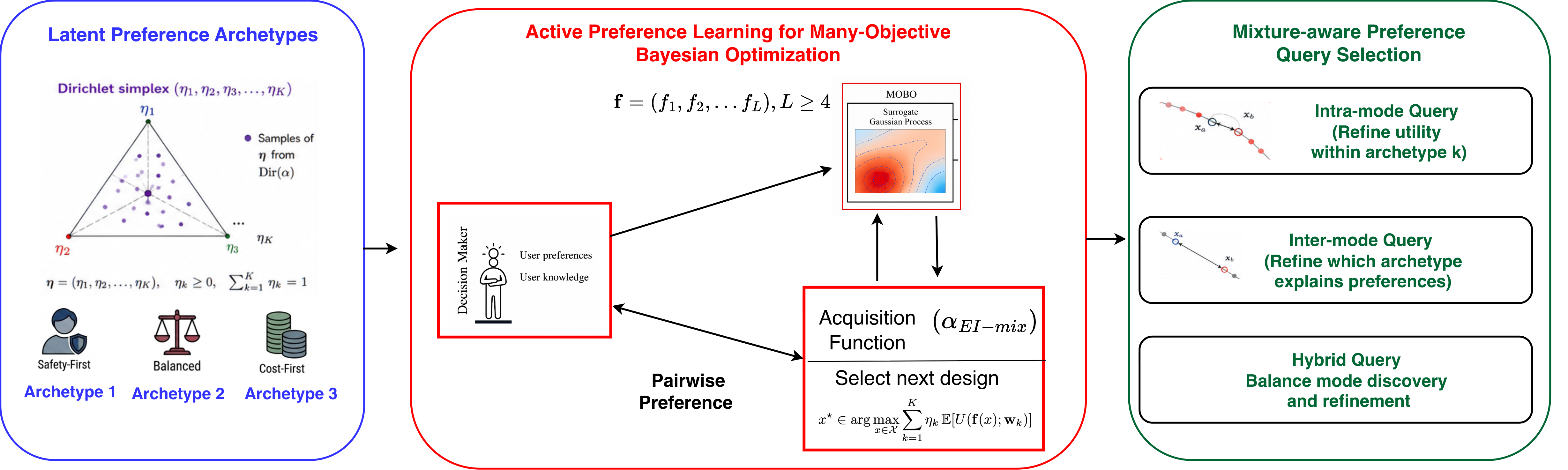}
    \caption{Overview of the proposed active preference learning framework for many-objective Bayesian optimization. Pairwise comparisons are used to infer latent preference archetypes and their mixture weights(left). The inferred posterior guides Gaussian-process-based many-objective Bayesian optimization through a mixture-aware acquisition function and iterative preference feedback from decision maker (centre). Active query selection chooses informative intra-mode, inter-mode, and hybrid comparisons to jointly refine utility estimation and latent preference archetype discovery (right).}
    \label{fig:architecture}
\end{figure*}

\section{Proposed Methodology}
 In this section, we propose a preference-based many-objective Bayesian optimization framework that models decision-maker preferences as a mixture of latent preference archetypes, relaxing the conventional single-utility assumption (see Fig. \ref{fig:architecture}). Modeling preferences as a mixture of latent preference archetypes allows multiple preference archetypes to coexist. This introduces two complementary sources of uncertainty: identifying the active preference archetype and learning its associated trade-offs. Accordingly, our framework consists of four components: (i) Gaussian process surrogates for the expensive objectives, (ii) a latent mixture preference model, (iii) variational posterior inference, and (iv) mixture-aware acquisition strategies for selecting both new evaluations and informative preference queries. We also present the algorithm \ref{alg:maobo-mix} here.

\subsection{Objective Surrogate Model}
We model each objective with an independent Gaussian process (GP). Let
$\mathbf{f}:\mathcal{X}\to\mathbb{R}^{L}$ denote the $L$ objectives. Each objective function \(f_\ell\) is modeled independently with a Gaussian process:
$ 
f_\ell \sim \mathcal{GP}(m_\ell, k_\ell), \quad \ell = 1,\dots,L.
$ Conditioning on observed data yields
predictive means and variances. Given observations \(\mathcal D_f = \{(x_i,\mathbf y_i)\}\),
we obtain posterior predictive distributions
$
f_\ell(x) \mid \mathcal D_f
\sim
\mathcal N(\mu_\ell(x), \sigma_\ell^2(x)).
$
Independence across objectives keeps inference
and acquisition scalable in $L$ \cite{rasmussen2010gaussian, williams2006gaussian}. One could also employ multi-output gaussian process \cite{alvarez2008sparse}.

\subsection{Latent Preference Archetypes}

Existing preference-based Bayesian optimization methods typically assume that all pairwise comparisons arise from a single latent utility function \cite{ozaki2023multi, astudillo2020multi}. We instead represent decision-maker preferences as a mixture of \(K\) latent preference archetypes, allowing multiple trade-off strategies to coexist. Unlike a single utility function, the mixture representation allows different pairwise comparisons to be explained by different latent preference archetypes, naturally capturing heterogeneous or context-dependent trade-off strategies. Each archetype corresponds to a distinct scalarization of the objectives, while the mixture captures uncertainty over which archetype generated each observed comparison. Each archetype \(k\) is parameterized by a weight vector
\(
\mathbf w_k\in\Delta^{L-1}
\)
that defines a Chebyshev utility over objective outcomes.
For a predicted outcome \(\mathbf y\), the utility under archetype \(k\) is

\begin{equation}
U(\mathbf y;\mathbf w_k)
=
-
\min_{\ell}
\frac{y_\ell}{w_{k\ell}},
\qquad
w_{k\ell}>0,
\end{equation}

where utilities are computed using the GP posterior predictive means. Rather than fixing a single preference model, we place a Dirichlet-process mixture prior over the archetypes. Using a truncated stick-breaking construction~\cite{blei2006variational}, $ v_k\sim\mathrm{Beta}(1,\alpha), \eta_k = v_k \prod_{j<k}(1-v_j),$  yielding mixture weights $\boldsymbol\eta\in\Delta^{K-1}$. Each archetype is assigned a preference weight vector $ \mathbf w_k \sim \mathrm{Dir}(\boldsymbol\beta)$ representing a distinct trade-off over the objectives. For each observed comparison
\(
(\mathbf y_i,\mathbf y_i')
\),
a latent archetype assignment $ z_i \sim \mathrm{Categorical}(\boldsymbol\eta) $ selects the preference mode responsible for that comparison. Conditioned on \(z_i=k\), pairwise preferences follow the probit random-utility model,
\setlength{\abovedisplayskip}{0pt}
\setlength{\belowdisplayskip}{0pt}
\setlength{\abovedisplayshortskip}{0pt}
\setlength{\belowdisplayshortskip}{0pt}
\begin{equation}
\label{eq:probit_likelihood}
P(\mathbf y_i \succ \mathbf y'_i \mid z_i = k)
=
\Phi\!\left(
\frac{
U(\mathbf y_i;\mathbf w_k)
-
U(\mathbf y'_i;\mathbf w_k)
}{
\sqrt{2}\sigma_u
}
\right).
\end{equation}

Then we marginalize latent assignments as
\setlength{\abovedisplayskip}{0pt}
\setlength{\belowdisplayskip}{0pt}
\setlength{\abovedisplayshortskip}{0pt}
\setlength{\belowdisplayshortskip}{0pt}
\begin{equation}
p(\mathcal D_{\mathrm{pref}} \mid \boldsymbol\eta, \mathbf w_{1:K})
\!\!=\!\!
\prod_{i=1}^N
\sum_{k=1}^K
\eta_k
\Phi\!\left(
\frac{
U(\mathbf y_i;\mathbf w_k)
-
U(\mathbf y'_i;\mathbf w_k)
}{
\sqrt{2}\sigma_u
}
\right)
\end{equation}

We adopt the standard probit random-utility model for pairwise comparisons \cite{astudillo2020multi, ozaki2023multi}. The proposed framework is independent of this choice and can be combined with alternative pairwise likelihoods such as Bradley-Terry-Luce.

\subsection{Posterior Inference}
\label{sec:inference}
Let
\(
\mathcal D_{\mathrm{pref}}
=
\{(\mathbf y_i,\mathbf y_i')\}_{i=1}^N
\)
denote the observed pairwise comparisons. Given the generative model, our objective is to infer the posterior over the latent preference variables, $p(\boldsymbol\eta,\mathbf w_{1:K},\mathbf z,\alpha \mid \mathcal D_{\mathrm{pref}}) $ where \(\boldsymbol\eta\), \(\mathbf w_{1:K}\), \(\mathbf z\), and \(\alpha\) denote the mixture weights, preference archetype weights, latent archetype assignments, and concentration parameter, respectively. The corresponding joint distribution is
\setlength{\abovedisplayskip}{0pt}
\setlength{\belowdisplayskip}{0pt}
\setlength{\abovedisplayshortskip}{0pt}
\setlength{\belowdisplayshortskip}{0pt}
\begin{align}
&p(\mathcal D_{\mathrm{pref}}, \mathbf z, \boldsymbol\eta, \mathbf w_{1:K}, \alpha)
 \\
&=
p(\alpha) \!\!
\prod_{k=1}^K p(v_k \!\! \mid \!\! \alpha)\, p(\mathbf w_k)
\prod_{i=1}^N
p(z_i \! \mid \! \boldsymbol\eta)\,
p(\mathbf y_i \! \succ \! \mathbf y'_i \mid \! z_i, \mathbf w_{z_i}) \notag
\end{align}
Exact posterior inference is intractable because the latent archetype assignments are marginalized and the probit likelihood is non-conjugate. We therefore approximate the posterior using mean-field stochastic variational inference. Under the latent mixture preference model, the unknown variables consist of the mixture weights
\(\boldsymbol\eta\),
archetype weights
\(\mathbf w_{1:K}\),
latent archetype assignments
\(\mathbf z\),
and concentration parameter
\(\alpha\).
The resulting posterior $p(\boldsymbol\eta,\mathbf w_{1:K},\mathbf z,\alpha \mid \mathcal D_{\mathrm{pref}}) $ is analytically intractable because of the latent mixture structure and non-conjugate preference likelihood. We therefore approximate the posterior using a mean-field variational distribution
\begin{equation}
q(\boldsymbol\eta,\mathbf w_{1:K},\mathbf z,\alpha)
=
q(\alpha)
\prod_{k=1}^K
q(v_k)\,
q(\mathbf w_k)
\prod_{i=1}^N
q(z_i).
\end{equation}

The variational parameters are obtained by maximizing the evidence lower bound (ELBO)
\begin{equation}
\mathcal L(q)
=
\mathbb E_q\!\left[
\log p(\mathcal D_{\mathrm{pref}}, \mathbf z, \boldsymbol\eta, \mathbf w_{1:K}, \alpha)
\right]
-
\mathbb E_q\!\left[
\log q(\boldsymbol\eta, \mathbf w_{1:K}, \mathbf z, \alpha)
\right]
\end{equation}
using stochastic variational inference (SVI).

\subsection{Design Acquisition}

At iteration \(t\), the learner maintains a posterior over the objective functions and latent preference parameters. Let $ \hat{\theta} = (\hat{\boldsymbol\eta},\hat{\mathbf w}_{1:K}) $ denote the posterior mean estimates of the mixture weights and preference archetype weights obtained from the variational posterior. The GP surrogate induces a posterior
\(
p(\mathbf f(x)\mid\mathcal D_t)
\)
over the objective vector at any candidate design \(x\). For a given objective vector \(\mathbf y\), the mixture utility is computed as $
U_{\mathrm{mix}}(\mathbf y;\hat{\theta})
=
\sum_{k=1}^{K}
\hat{\eta}_k\,
U(\mathbf y;\hat{\mathbf w}_k),
$ which represents the expected utility under the inferred mixture of preference archetypes. The next design is selected by maximizing the mixture expected improvement (mixture-EI),
\begin{equation}
\label{eq:aqn_fn}
\alpha_{\mathrm{EI\mbox{-}mix}}(x)
=
\mathbb E_{\mathbf f(x)}
\!\left[
\max\!\left(
U_{\mathrm{mix}}(\mathbf f(x);\hat{\theta})
-
U_{\mathrm{best}},
0
\right)
\right],
\end{equation}
where $ U_{\mathrm{best}} = \max_{i\le t} U_{\mathrm{mix}}(\mathbf y_i;\hat{\theta})$
is the best observed mixture utility under the inferred preference model. The expectation is taken with respect to the GP posterior and is approximated using Monte Carlo samples of \(\mathbf f(x)\). This acquisition balances exploration and exploitation while incorporating the inferred mixture of latent preference archetypes through the posterior mean preference estimates. Using posterior mean estimates provides a computationally efficient plug-in approximation to the fully Bayesian acquisition, avoiding nested Monte Carlo integration over both objective and preference uncertainty.
\subsection{Preference Query Acquisition}
\label{sec:preference_query_acquisition}
In addition to evaluating new designs, the learner may select a
pairwise comparison between two previously observed outcomes
$(\mathbf y_i,\mathbf y_j)$. Let
$ r \in \{0,1\} $ denote the random response of the decision maker (DM),
where $r=1$ indicates $\mathbf y_i \succ \mathbf y_j$. Let
\(
\theta=(\boldsymbol\eta,\mathbf w_{1:K})
\)
denote the latent preference parameters, and let
\(q_t(\theta)\)
denote the current variational posterior. The predictive probability of the response is 
$p(r=1\mid y_i,y_j,\mathcal D_t)
=
\int
p(r=1\mid y_i,y_j,\theta)
q_t(\theta)
d\theta$. The DM's preference $r$ updates the posterior $p(\theta \mid \mathcal D_t)$,
which in turn influences future design decisions. The value of a preference query can therefore be interpreted as its expected reduction in uncertainty about $\theta$, and hence its expected improvement in downstream decision quality. Computing the exact expected value of a preference query is generally intractable, as it requires updating the posterior for every possible response. Following \cite{ozaki2023multi}, we therefore adopt a Bayesian Active Learning by Disagreement (BALD) criterion \cite{houlsby2011bayesian}, which approximates query value by the mutual information between the response $r$ and the latent preference parameters $\theta$. 

\begin{equation}
\alpha_{\mathrm{pref}}(i,j)
=
I(r;\theta \mid \mathbf y_i,\mathbf y_j,\mathcal D_t)
\end{equation}

Using the entropy decomposition of mutual information, this can be written as
\begin{equation}
I(r;\theta)
=
H[r]
-
\mathbb E_{\theta \sim p(\theta \mid \mathcal D_t)}
\left[
H[r \mid \theta]
\right]
\end{equation}

where $H[\cdot]$ denotes Bernoulli entropy.
The first term measures overall predictive uncertainty,
while the second term measures expected uncertainty
under a fixed parameter setting.
Their difference quantifies how much the response
would reduce posterior uncertainty. The conditional preference probability
given $\theta$ is
\[
p(r=1 \mid \theta)
=
\sum_{k=1}^K \eta_k
\,
\Phi\!\left(
\frac{
U(\mathbf y_i;\mathbf w_k)
-
U(\mathbf y_j;\mathbf w_k)
}{\sqrt{2}\sigma_u}
\right)
\]

The marginal predictive probability is approximated by Monte Carlo samples from the variational posterior \(q_t(\theta)\). Under the proposed mixture model, preference queries naturally decompose into two complementary objectives: reducing uncertainty over the active preference archetype (inter-cluster queries) and reducing uncertainty over the preference weights within an archetype (intra-cluster queries). A hybrid acquisition combines these objectives through a convex combination. Please note that other information-theoretic criteria (e.g., predictive entropy) could be substituted within the same framework.

\paragraph{Inter-archetype query}
In the mixture preference model, uncertainty arises both from
which preference archetype governs the DM and from the weights within each mode.
The inter-cluster acquisition specifically targets uncertainty
about the latent preference archetype. Let $K$ denote the discrete latent archetype variable and
$r \in \{0,1\}$ the response to a comparison $(\mathbf y_i,\mathbf y_j)$.
We define the inter-cluster acquisition as the mutual information
between the response and the archetype identity, which can be further decomposed as:
\setlength{\abovedisplayskip}{1pt}
\setlength{\belowdisplayskip}{1pt}
\setlength{\abovedisplayshortskip}{1pt}
\setlength{\belowdisplayshortskip}{1pt}
\begin{equation}
\label{eq:inter_query}
I(r;K) = H[r]- \mathbb E_{K}\!\left[H[r \mid K]\right].
\end{equation}
\setlength{\abovedisplayskip}{1pt}
\setlength{\belowdisplayskip}{1pt}
\setlength{\abovedisplayshortskip}{1pt}
\setlength{\belowdisplayshortskip}{1pt}
The entropy form becomes
\begin{equation}
A_{inter}
=
H[p_{\mathrm{mix}}]
-
\sum_{k=1}^K \eta_k H[p_k].
\end{equation}

where mixture predictive is $
p_{\mathrm{mix}} = \sum_{k=1}^K \eta_k p_k$. 
This quantity is large when the mixture predictive is uncertain
(high $H[p_{\mathrm{mix}}]$)
but each individual cluster prediction is confident.

\paragraph{Intra-archetype query}
While the inter-mode criterion targets uncertainty over the active archetype,
the intra-mode criterion targets uncertainty over the archetype weights
within a particular preference archetype. For a candidate comparison between two previously observed outcomes $(\mathbf y_i, \mathbf y_j)$ with response $r \in \{0,1\}$ and target archetype $c \in [K]$,we define the intra-mode acquisition as the mutual information between $r$ and the archetype-$c$ weight vector:
\setlength{\abovedisplayskip}{1pt}
\setlength{\belowdisplayskip}{1pt}
\setlength{\abovedisplayshortskip}{1pt}
\setlength{\belowdisplayshortskip}{1pt}
\begin{equation}
\label{eq:intra_query}
A_{\mathrm{intra}}^{(c)}(i,j)
=
I\!\left(r;\mathbf w_c \mid K=c,\,\mathbf y_i,\mathbf y_j,\,\mathcal D_t\right)
\end{equation}

Using the BALD entropy decomposition, we obtain
\[
A_{\mathrm{intra}}^{(c)}(i,j)
=
H\!\left(\bar p_c\right)
-
\mathbb E_{\mathbf w \sim q_t(\mathbf w_c)}
\left[
H\!\left(p_{\mathbf w}\right)
\right]
\]
where $ \bar p_c
= \mathbb E_{\mathbf w \sim q_t(\mathbf w_c)}\!\left[p_{\mathbf w}\right]$ and 
$q_t(\mathbf w_c)$ denotes the current approximate posterior
(variational factor) over $\mathbf w_c$ after observing $\mathcal D_t$. 

\paragraph{Hybrid query} The hybrid acquisition balances two complementary objectives:
identifying the active preference archetype (inter-cluster exploration)
and refining the preference weights within that archetype
(intra-cluster refinement).
\begin{equation}
\label{eq:hybrid_query}
A_{\mathrm{hybrid}}
=
\lambda A_{\mathrm{inter}}
+
(1-\lambda) A_{\mathrm{intra}},
\quad \lambda \in [0,1].
\end{equation}

The hyperparameter $\lambda$
controls the exploration trade-off between
mode disambiguation and weight refinement.

Under exact GP inference, the dominant per-iteration computational cost arises from fitting $L$ independent Gaussian process (GP) surrogates over $n$ observations, resulting in a complexity of $\mathcal{O}(Ln^3)$. The additional cost of variational mixture inference scales approximately linearly with the number of mixture components $K$ and the number of preference observations $q$, giving $\mathcal{O}(Kq)$ (or $\mathcal{O}(Kn)$ when $q$ scales with $n$). Consequently, a practical upper bound on the per-iteration computational complexity is $\mathcal{O}(Ln^3 + Kq)$ where GP updates remain the dominant computational bottleneck, while the overhead introduced by the mixture model is approximately linear in the number of active components.

\begin{algorithm}[t]
\caption{\textsc{Mixture-Based MaOBO}}
\label{alg:maobo-mix}
\begin{algorithmic}[1]
\Require $\mathbf{f}:\mathcal X \to \mathbb R^L$, truncation $K$, query mode
\State Initialize dataset $\mathcal D_f$ and preference set $\mathcal D_{pref}$
\For{$t=1,2,\dots$}
    \State Fit $L$ independent GPs on $\mathcal D_f$
    \State Update DP-mixture posterior via SVI to obtain  $(\hat{\eta}, \{\hat w_k\}_{k=1}^K)$
    \State $\mathbf{x}_t \gets \arg\max_{\mathbf{x}\in\mathcal X}\textsf{EI}_{\textsf{mix}}(\mathbf{x};\hat\eta,\{\hat{\mathbf{w}_k}\})$  (using Eq. \ref{eq:aqn_fn})
    \State Evaluate $\mathbf{y}_t = \mathbf{f}(\mathbf{x}_t)$ and update $\mathcal D_f$
    \State $(\mathbf{y}_a, \mathbf{y}_b) \gets \textsc{SelectPair}(\{\mathbf{y}_i\}, \hat{\eta}, \{\hat{\mathbf{w}}_k\}, \textsf{mode})$ 
    \State Query for preference label $\mathbf{y}_{\text{pref}}\succ \mathbf{y}_{\text{other}}$ 
    \State Set $\mathcal D_{pref}\leftarrow \mathcal D_{pref}\cup\{(\mathbf{y}_{\text{pref}}, \mathbf{y}_{\text{rej}})\}$ 
\EndFor
\end{algorithmic}
\end{algorithm}

\section{Theoretical Proofs}

Our analysis builds on standard high-probability concentration bounds for Gaussian process (GP) regression
\cite{srinivas2010gaussian,kandasamy2016gaussian,av2022human}, which hold uniformly over arbitrary evaluation sequences and are therefore independent of the acquisition strategy. We combine these bounds with Lipschitz properties of the Chebyshev utility to translate objective-level approximation errors into errors in the inferred mixture utility. Since our acquisition jointly depends on GP surrogate accuracy and latent preference estimation, we derive a high-probability simple regret decomposition that separates the contributions of GP approximation, preference archetype estimation, mixture-weight estimation, and acquisition optimization.

\begin{theorem}[Simple regret decomposition]
\label{thm:regret_safe}

Let $\mathcal X$ be compact and let 
$\mathbf f = (f_1,\dots,f_L):\mathcal X\to\mathbb R^L$ 
be an $L$-objective function. Assume:

\begin{itemize}
\item[(A1)] Each objective $f_\ell$ lies in an RKHS $\mathcal H_\ell$ with kernel $k_\ell$ and 
$\|f_\ell\|_{\mathcal H_\ell}\le B_f$.

\item[(A2)] For all $x\in\mathcal X$, 
$\|\mathbf f(x)\|_\infty \le B_y$.

\item[(A3)] Preference weights satisfy 
$w_{k\ell} \ge c_w > 0$ for all modes $k$ and objectives $\ell$.

\item[(A4)] Observations are corrupted by independent $\sigma^2$–sub-Gaussian noise.
\end{itemize}

Define the Chebyshev utility for minimization
\[
U(\mathbf y; \mathbf w)
=
- \min_{1\le \ell \le L}
\frac{y_\ell}{w_\ell},
\]
and the mixture utility
\[
U_{\mathrm{mix}}(\mathbf y; \eta, \{w_k\})
=
\sum_{k=1}^K \eta_k U(\mathbf y; w_k).
\]

Let $(\eta^\star, w_k^\star)$ denote the true mixture parameters and define
\[
U^\star
=
\sup_{x\in\mathcal X}
U_{\mathrm{mix}}(\mathbf f(x); \eta^\star, \{w_k^\star\}).
\]

After $T$ evaluations $x_1,\dots,x_T$, define the simple regret
\[
R_T
=
U^\star
-
\max_{t\le T}
U_{\mathrm{mix}}(\mathbf f(x_t); \eta^\star, \{w_k^\star\}).
\]

Then, with probability at least $1-\delta$,

\[
R_T
\;\le\;
C_1
\sqrt{\frac{\beta_T \gamma_T}{T}}
+
C_2 \, \Delta_w(T)
+
C_3 \, \Delta_\eta(T)
+
\varepsilon_T,
\]

where $\gamma_T$ is the maximal GP information gain, $\beta_T$ is the standard GP confidence parameter, $\Delta_w(T)=\max_k \|\hat w_k - w_k^\star\|_1$, $\Delta_\eta(T)=\|\hat \eta - \eta^\star\|_1$, $\varepsilon_T$ is the optimization error in maximizing the acquisition, $C_1,C_2,C_3$ depend only on $B_y$ and $c_w$.
\end{theorem}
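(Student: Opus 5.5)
The plan is to introduce the plug-in mixture utility $\hat U(\mathbf y) := U_{\mathrm{mix}}(\mathbf y;\hat\eta,\{\hat w_k\})$ and the true one $U^\star_{\mathrm{mix}}(\mathbf y) := U_{\mathrm{mix}}(\mathbf y;\eta^\star,\{w_k^\star\})$. The regret then splits into three pieces: (i) a uniform preference-misspecification gap $\sup_{\|\mathbf y\|_\infty\le B_y}|\hat U(\mathbf y)-U^\star_{\mathrm{mix}}(\mathbf y)|$, which is paid twice (at the optimizer $x^\star$ and at the selected point); (ii) a GP-regret term for optimizing the fixed function $x\mapsto \hat U(\mathbf f(x))$; and (iii) the acquisition optimization error $\varepsilon_T$.

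\textbf{Lipschitz and perturbation bounds.}
\begin{itemize}
\item \emph{Lipschitz in $\mathbf y$.} Since $\min_\ell$ is $1$-Lipschitz in $\|\cdot\|_\infty$ and $w_\ell\ge c_w$ by (A3), we get $|U(\mathbf y;w)-U(\mathbf y';w)|\le c_w^{-1}\|\mathbf y-\mathbf y'\|_\infty$. The same bound holds for $U_{\mathrm{mix}}$ for any $\eta$ in the simplex.
\item \emph{Perturbation in $w$.} For fixed $\mathbf y$ with $\|\mathbf y\|_\infty\le B_y$, the map $w_\ell\mapsto y_\ell/w_\ell$ on $[c_w,1]$ has derivative bounded by $B_y/c_w^2$. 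Hence $|U(\mathbf y;w)-U(\mathbf y;w')|\le (B_y/c_w^2)\|w-w'\|_\infty\le (B_y/c_w^2)\|w-w'\|_1$. This requires the same lower bound $c_w$ for $\hat w_k$; I will assume (A3) is imposed on the estimates too, for instance by projection.
\item \emph{Perturbation in $\eta$.} Since $|U|\le B_y/c_w$, we have $|\sum_k(\hat\eta_k-\eta_k^\star)U(\mathbf y;\hat w_k)|\le (B_y/c_w)\Delta_\eta$.
\end{itemize}
Adding and subtracting $\sum_k\eta^\star_k U(\mathbf y;\hat w_k)$ gives $\sup_{\mathbf y}|\hat U-U^\star_{\mathrm{mix}}|\le (B_y/c_w^2)\Delta_w+(B_y/c_w)\Delta_\eta$. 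This fixes $C_2=2B_y/c_w^2$ and $C_3=2B_y/c_w$.

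\textbf{GP part.} Take the standard uniform confidence bound of \cite{srinivas2010gaussian} (RKHS version, using (A1), (A4), and a union bound over $\ell$ with $\delta/L$): with probability $1-\delta$, for all $t,x,\ell$, $|f_\ell(x)-\mu_{\ell,t-1}(x)|\le\beta_t^{1/2}\sigma_{\ell,t-1}(x)$. This holds for arbitrary query sequences. Through the Lipschitz bound it yields $|\hat U(\mathbf f(x))-\hat U(\boldsymbol\mu_{t-1}(x))|\le c_w^{-1}\beta_t^{1/2}\max_\ell\sigma_{\ell,t-1}(x)$.

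I then argue as for GP-UCB/EI, treating the acquisition as producing a point $x_t$ that is $\varepsilon_T$-optimal for an optimistic surrogate. This gives the instantaneous regret bound $r_t:=\sup_x\hat U(\mathbf f(x))-\hat U(\mathbf f(x_t))\le 2c_w^{-1}\beta_t^{1/2}\max_\ell\sigma_{\ell,t-1}(x_t)+\varepsilon_T$. Summing, using $\max_\ell\sigma^2\le\sum_\ell\sigma_\ell^2$ and the information-gain lemma $\sum_t\sigma_{\ell,t-1}^2(x_t)\le C\gamma_T$, Cauchy–Schwarz gives $\sum_t r_t\le C_1\sqrt{T\beta_T\gamma_T}+T\varepsilon_T$, where $\gamma_T$ absorbs the factor $L$. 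Simple regret is at most the average, which yields $C_1\sqrt{\beta_T\gamma_T/T}+\varepsilon_T$.

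\textbf{Combining and main obstacle.} The regret $R_T$ is bounded by the $\hat U$-regret plus twice the misspecification gap, which gives the stated inequality. The main obstacle is the GP step, for two reasons. First, the algorithm uses mixture-EI rather than UCB. I would handle this by defining $\varepsilon_T$ generously to cover the gap between EI and an optimistic rule, or by invoking EI regret results at a cost of constants. Second, $\hat\theta$ changes with $t$, so the "fixed $\hat U$" argument needs $\Delta_w(T)$ and $\Delta_\eta(T)$ read as uniform or final-time errors. I would take them as $\max_{t\le T}$ of the estimation errors so that the decomposition remains valid.
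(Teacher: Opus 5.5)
Your argument differs from the paper's, and where the two differ, yours is the sound one. The paper assumes $x_t$ is $\varepsilon_t$-optimal for the posterior-mean plug-in $\widehat{\mathcal U}_t(x)=\sum_k\hat\eta_{k,t}U(\mu_{t-1}(x),\hat w_{k,t})$. It then bounds $|\mathcal U^\star-\widehat{\mathcal U}_t|$ at both $x^\star$ and $x_t$. You avoid two problems this creates.

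\begin{itemize}
\item \emph{Where the preference perturbations are evaluated.} The paper evaluates them at $\mu_{t-1}(x)$, which needs $\|\mu_{t-1}(x)\|_\infty\le B_y$; this is never proved. You evaluate them at $\mathbf f(x)$, where (A2) applies directly.
\item \emph{The term at $x^\star$.} The paper's per-round bound contains $\|f(x^\star)-\mu_{t-1}(x^\star)\|_\infty$. It reaches $\frac{2}{c_w}\min_{t\le T}\|f(x_t)-\mu_{t-1}(x_t)\|_\infty$ only by silently replacing this with the term at $x_t$; the variance-sum lemma controls $\sigma_{t-1}$ only at queried points. It also splits $\min_t$ of a sum into a sum of $\min_t$'s, which is the wrong direction.
\end{itemize}

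Your optimism step is exactly what removes the $x^\star$ term. Bounding the minimum by the average, with $\Delta_w,\Delta_\eta,\varepsilon$ read as $\max_{t\le T}$, is the correct version of the paper's final step. Your constants $C_2=2B_y/c_w^2$ and $C_3=2B_y/c_w$ match the paper's.

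The cost is the obstacle you flagged, and your proposed fix for it does not work. The optimism inequality needs $x_t$ to be near-optimal for an optimistic index, for example $\hat U_t(\mu_{t-1}(x))+c_w^{-1}\sqrt{\beta_t}\max_\ell\sigma_{\ell,t-1}(x)$. Neither mixture-EI nor the paper's posterior-mean rule has that form.

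You cannot absorb this discrepancy into $\varepsilon_T$. It is not an optimization error, and it need not vanish. For pure posterior-mean maximization, no bound with a vanishing GP term holds in general: the greedy rule can lock onto an early sampled region and never shrink $\sigma_{t-1}(x^\star)$. Appealing to EI regret results also would not apply off the shelf to a Monte Carlo composite EI with plug-in preferences.

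So state optimism as an explicit hypothesis on the acquisition rule. With that hypothesis, your proof is complete. The paper's proof has the same hole; it is just hidden.

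Finally, your $C_1$ carries a factor $\sqrt{L}$ and the noise-dependent constant from the information-gain lemma. It therefore does not depend only on $B_y$ and $c_w$, as the statement claims. The paper hides the same dependence inside $\lesssim$.
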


\begin{remark}
Theorem~\ref{thm:regret_safe} provides an error decomposition for the final simple regret rather than a classical sublinear cumulative regret guarantee. Its primary purpose is to identify the distinct sources of error introduced by objective approximation, latent preference estimation, and numerical optimization.
\end{remark}

Under consistent preference estimation, i.e., when $\Delta_w(T)\rightarrow0$, $\Delta_\eta(T)\rightarrow0$, and the acquisition optimization error $\varepsilon_T\rightarrow0$, the bound is dominated by the GP approximation term, which decreases at the standard GP convergence rate $\mathcal{O}\!\left(\sqrt{\beta_T\gamma_T/T}\right)$. The detailed proof is provided in the supplementary.

\section{Related Work}
While there are several recent efforts along the direction of multi-objective Bayesian optimization \cite{li2025expensive, haddadnia2025botier, hung2025boformer}, a line of works integrate human-in-the-loop for Bayesian optimization \cite{xu2024principled, av2022human}. Towards interactive multi-objective optimization, \cite{ozaki2024multi} proposes a framework that optimizes multiple objectives by actively querying the decision-maker in order to steer Bayesian optimization towards user's preferred trade-offs. There are limited efforts in the direction of many-objectove optimization. Classical many-objective Bayesian optimization increasingly avoids recovering large Pareto sets and instead targets a principled single compromise. The Kalai-Smorodinsky (KS) solution equalizes benefit ratios from a disagreement point to utopia; subsequent work further introduces a copula-invariant variant (CKS) to remove sensitivity to monotone rescalings, together with a GP-based stepwise-uncertainty-reduction (SUR) scheme that scales to 4-9 objectives \cite{binois2020kalai}. These methods elegantly collapse decision making to a point solution, but they assume access to full objective vectors (rather than comparisons) and a single underlying taste of utility. A complementary line seeks a small set of solutions that collaboratively “cover” many objectives. Recent work proposes Tchebycheff-set (TCH-Set) scalarization and a smooth variant (STCH-Set) that mitigates the non-smooth max operator, with theory and empirical studies showing that a handful of solutions can handle tens of objectives effectively \cite{lin2024few}. Such approaches reduce downstream decision complexity, yet still optimize from full objective feedback and typically posit a single, fixed scalarization rather than heterogeneous preferences. Orthogonal to both is objective reduction where detecting redundant objectives via similarity between GP predictive distributions and dropping them to save evaluations. This can preserve solution quality on toy, synthetic, and real setups while cutting cost \cite{martin2021many}, but it maintains a single user model and does not reason about preference heterogeneity. Relative to single–compromise and few–solution paradigms \cite{binois2020kalai,lin2024few, jiang2026we}, our framework operates directly on preferences and embraces heterogeneity as signal, not noise. Relative to objective–reduction \cite{martin2021many}, we keep all objectives but learn which trade–off modes matter and when, delivering interpretable decisions under many objectives.

\begin{figure*}
\centering
\begin{subfigure}{0.30\textwidth}
  \includegraphics[width=\linewidth]{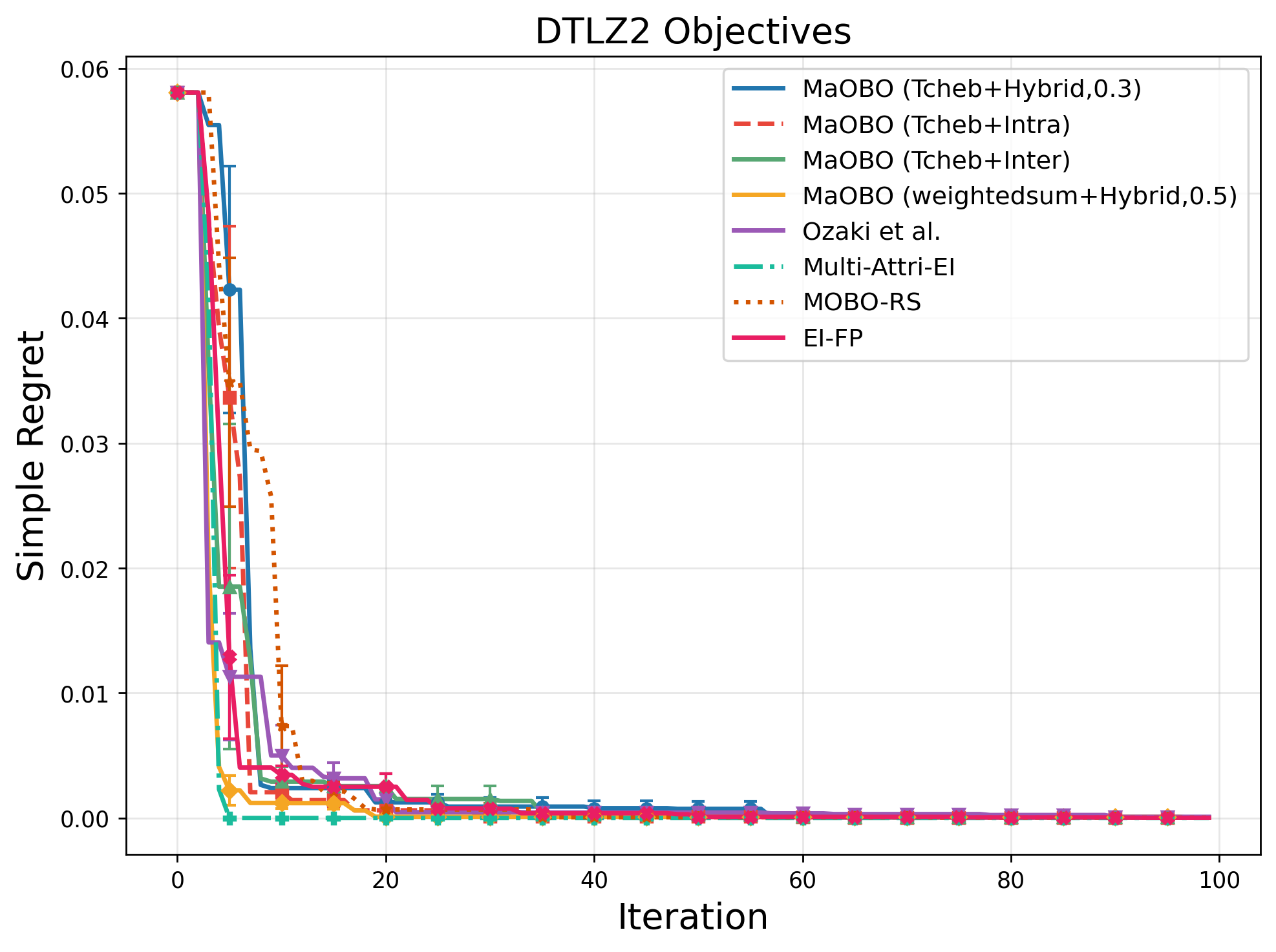}
  \caption{DTLZ2}
\end{subfigure}\hfill
\begin{subfigure}{0.30\textwidth}
  \includegraphics[width=\linewidth]{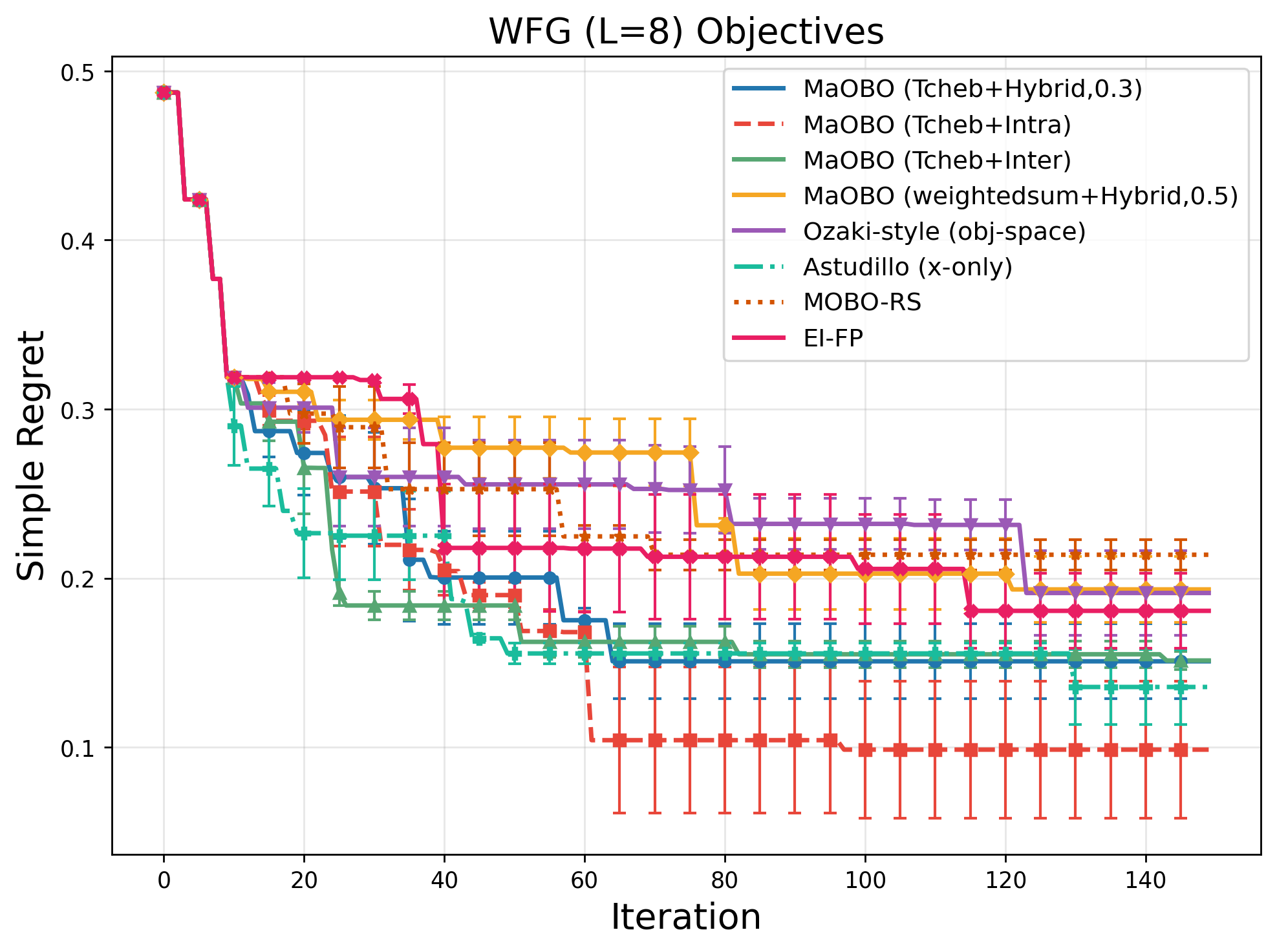}
  \caption{WFG}
\end{subfigure}\hfill
\begin{subfigure}{0.30\textwidth}
  \includegraphics[width=\linewidth]{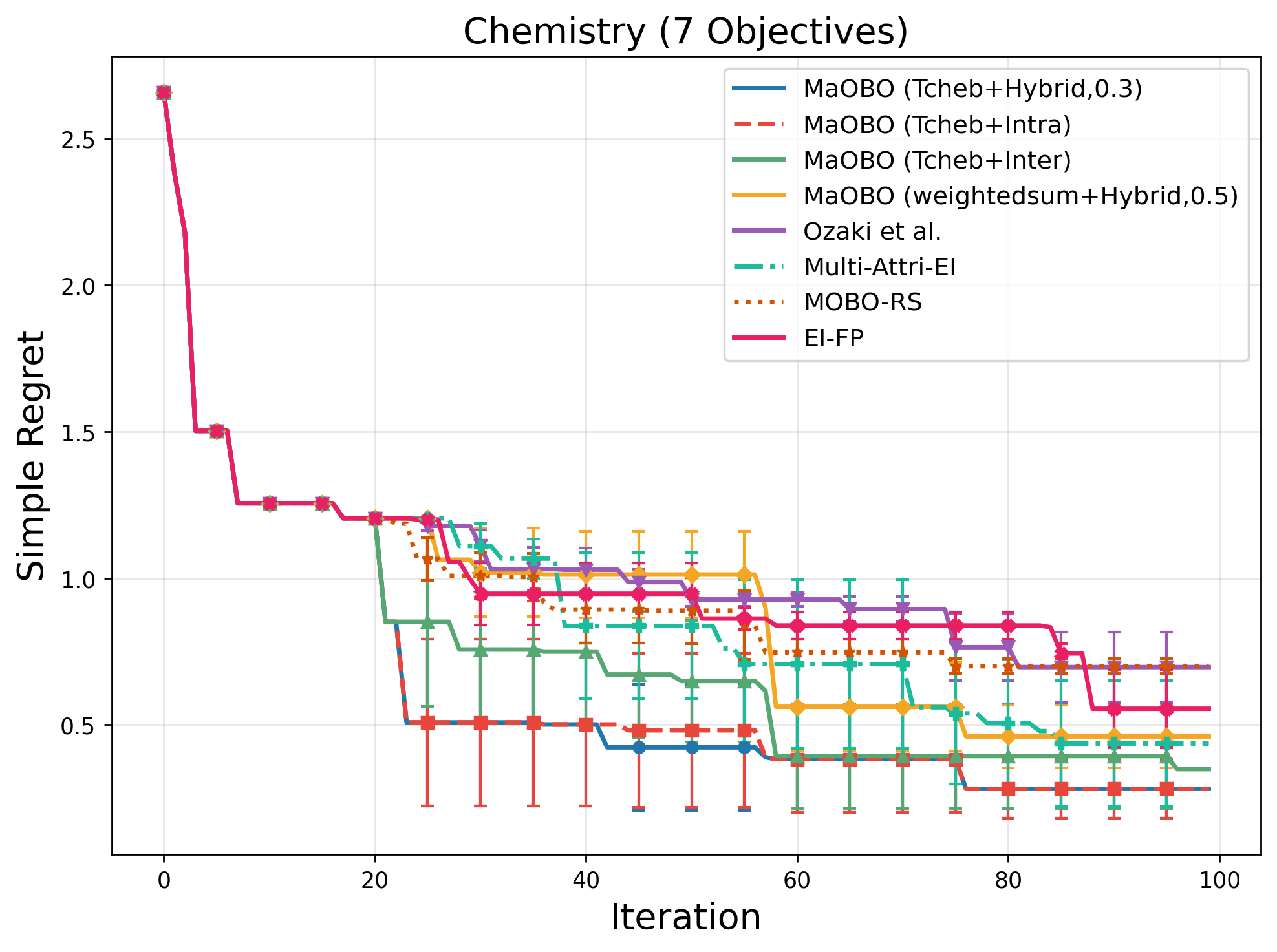}
  \caption{Chemistry}
\end{subfigure}

\vspace{-0.5mm}

\begin{subfigure}{0.23\textwidth}
  \includegraphics[width=\linewidth]{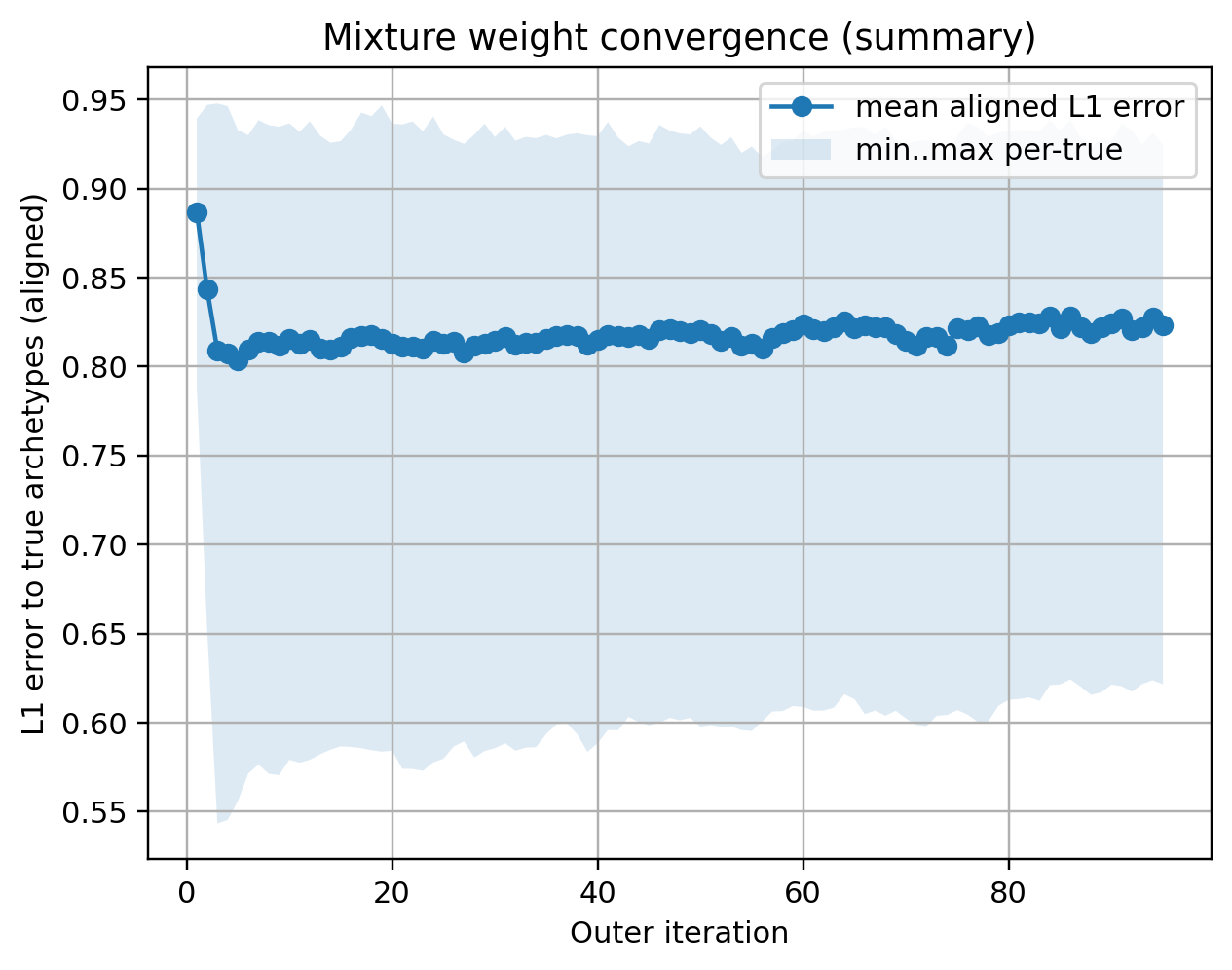}
  \caption{Clusterless}
\end{subfigure}\hfill
\begin{subfigure}{0.23\textwidth}
  \includegraphics[width=\linewidth]{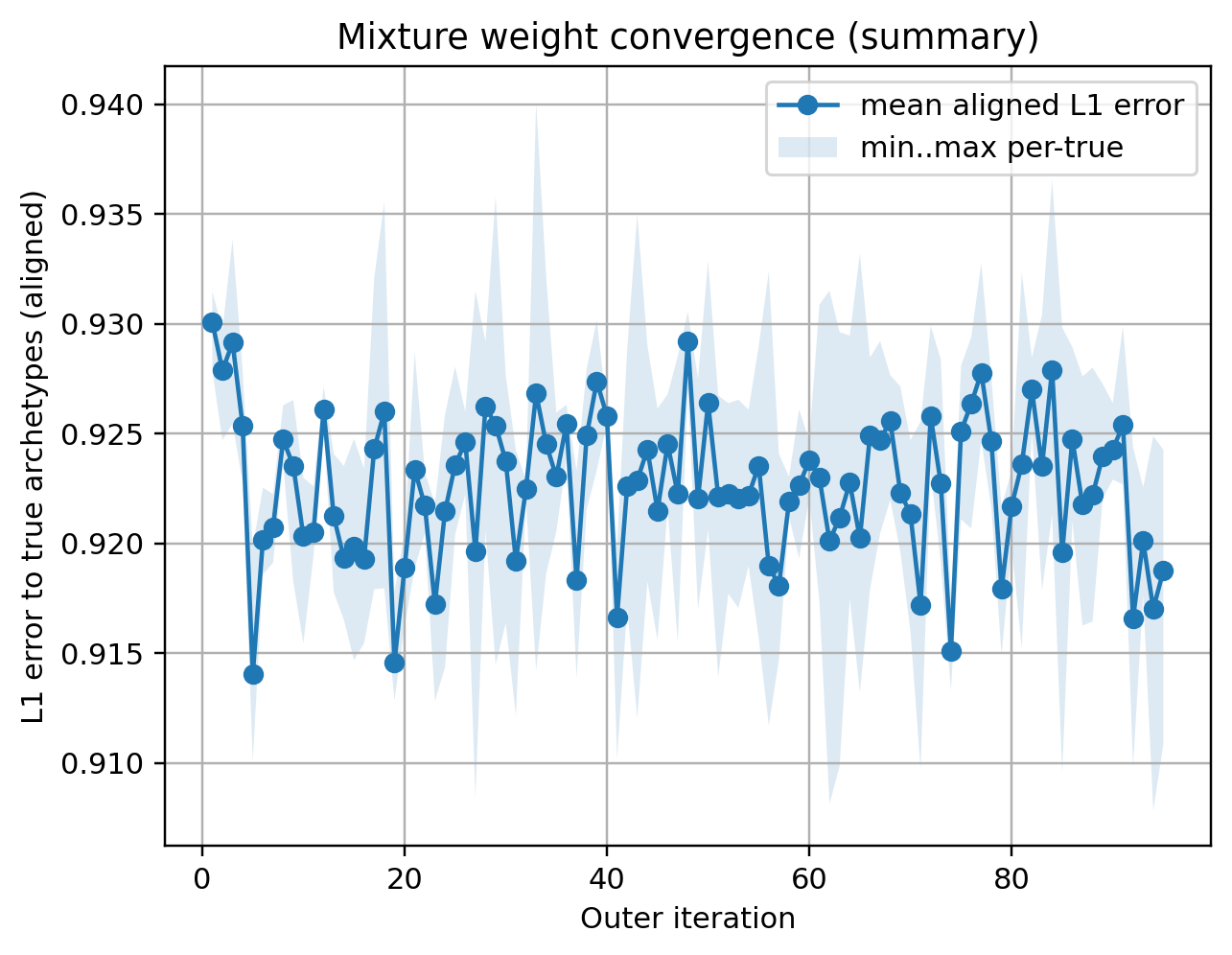}
  \caption{Intra}
\end{subfigure}\hfill
\begin{subfigure}{0.23\textwidth}
  \includegraphics[width=\linewidth]{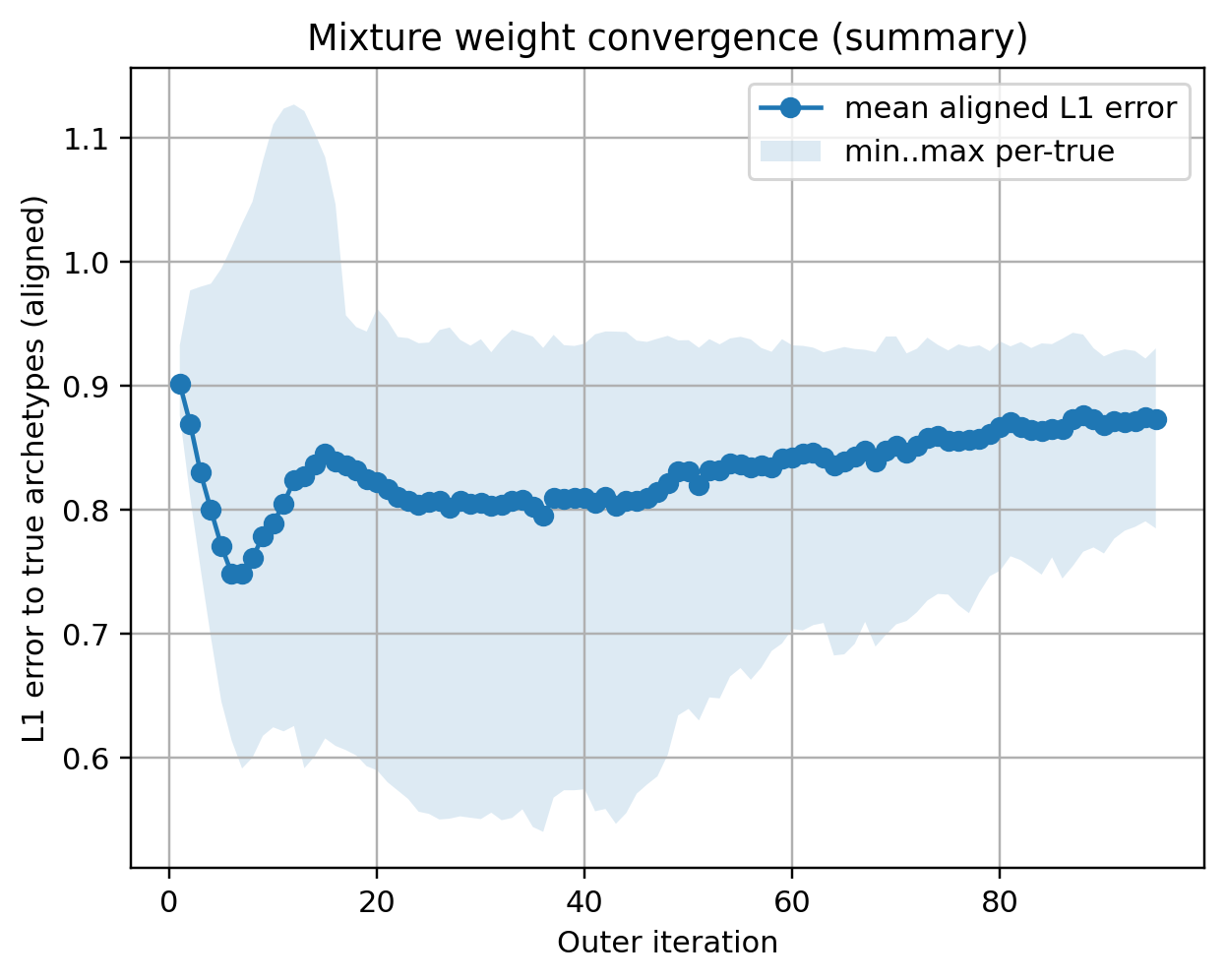}
  \caption{Inter}
\end{subfigure}\hfill
\begin{subfigure}{0.23\textwidth}
  \includegraphics[width=\linewidth]{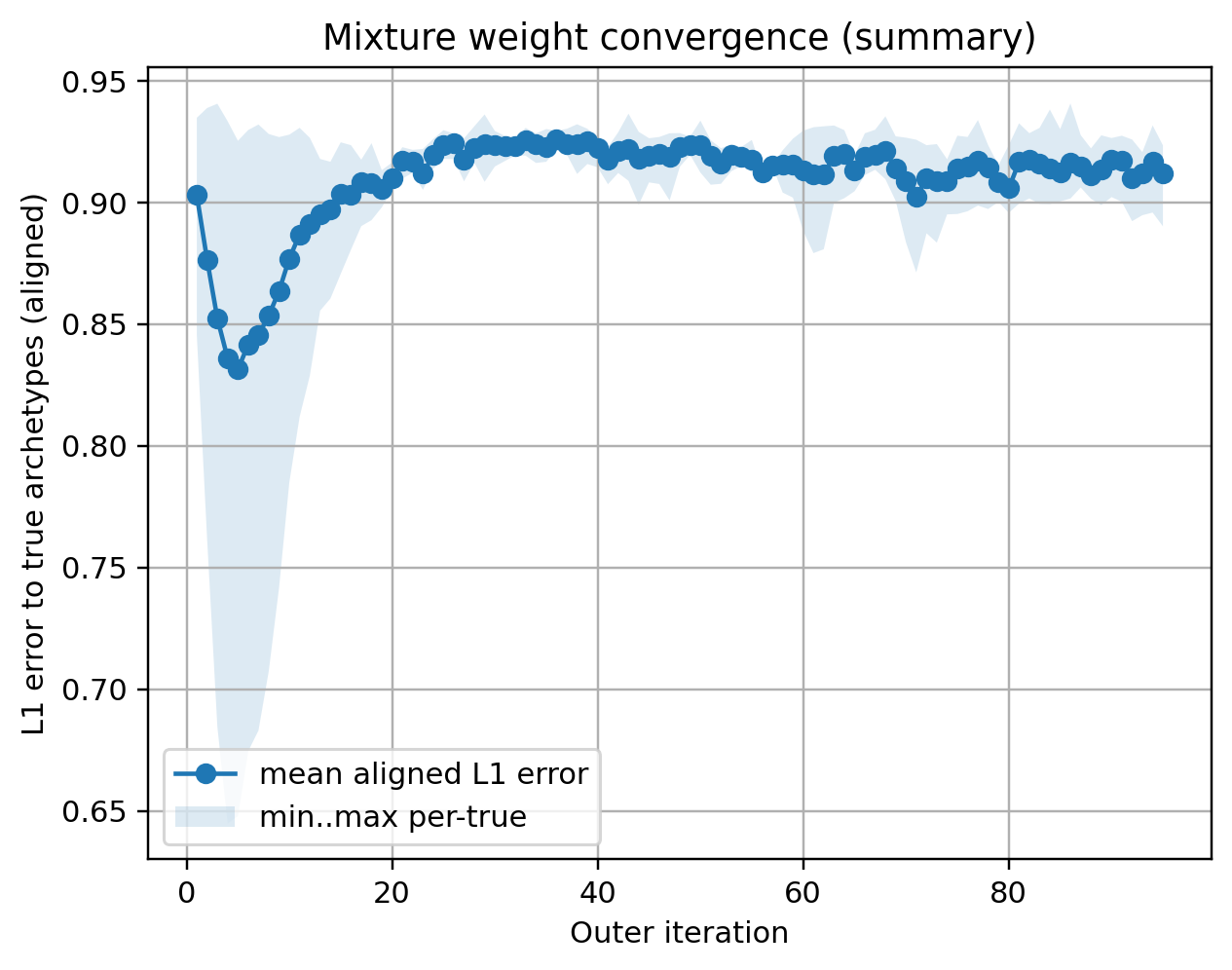}
  \caption{Hybrid}
\end{subfigure}
\vspace{-2mm}
\caption{ (Top) Mixture-aware policies reduce regret faster and achieve lower final regret than unimodal and random-scalarization baselines, with Hybrid performing best overall. This is visible from the steeper early decline and lower terminal curves of Inter and Hybrid across DTLZ2, WFG, and chemical task, while Clusterless and scalarization methods plateau higher with greater variability. Curves show mean simple regret over five independent runs; vertical bars denote $\pm$ one standard error.(Bottom)Hybrid most accurately and stably recovers true archetypes, showing the largest early drop and lowest final error with the narrowest band; Inter drops quickly but plateaus, Intra refines slowly with higher variance, and Clusterless remains flat. We report mean aligned L1 error per outer iteration (band = per-true min–max; lower is better). Dataset: DTLZ2, persistent context.}
\label{fig:baseline_comparison}
\end{figure*}

\section{Experiments}
\paragraph{Datasets} We evaluate on two standard many-objective optimization benchmarks, DTLZ2~\cite{deb2005scalable} and WFG9~\cite{huband2005scalable}, representing smooth and highly non-separable optimization landscapes, respectively. To simulate heterogeneous preferences, we generate pairwise comparisons from $K$ latent preference archetypes, each defined by a Chebyshev utility with weight vector $\mathbf w_k$. For DTLZ2 ($L=6$) and WFG9 ($L=8$), archetypes emphasize different objective groups by assigning $80\%$ of the weight to one group and distributing the remainder across the others. We consider both an i.i.d. setting, where latent modes are sampled independently from $\mathrm{Categorical}(\eta^\star)$, and a persistent setting, where mode identity evolves according to a sticky Markov process with persistence probability $\rho$. We further evaluate our method on a real-world chemical process design case study for polyethylene terephthalate (PET) production~\cite{wang2022machine}, comprising 10,000 simulated designs with 12 decision variables and seven objectives (return on investment and six life-cycle assessment metrics). We construct three stakeholder archetypes representing economic, environmental, and health priorities, with true mixture weights $\eta=[0.40,0.35,0.25]$ and persistence $\rho=0.8$. Additional dataset and simulation details are provided in the Supplementary.

\paragraph{Learning and Policies}
We adopt Chebyshev scalarization for its ability to represent non-convex trade-offs and its standard use in many-objective optimization, though the framework can accommodate alternative scalarizations. Each objective is modeled with an independent GP, and preference modes are learned via a truncated stick-breaking mixture using SVI with priors $v_k \sim \mathrm{Beta}(1,\alpha)$ and $\mathbf w_k \sim \mathrm{Dirichlet}(\boldsymbol\beta)$. Although we fix a truncation level $K$, the stick-breaking prior shrinks redundant components toward negligible mass, so effective modes correspond to components with non-trivial posterior weight. The hybrid parameter $\lambda$ is treated as a hyperparameter and selected via validation.

\paragraph{Baselines}
We compare against following optimization baselines: (i)\textit{Ozaki}~\cite{ozaki2023multi}, learns preference vector interactively using MCMC with single utility function; (ii) \textit{Multi-Attr-EI}~\cite{astudillo2020multi}, which infers a single preference vector using MAP estimation; (iii) \textit{MOBO-RS}~\cite{paria2020flexible}, based on random scalarizations; (iv) \textit{EI-FP}, expected improvement with known fixed preferences; acting as upper performance bound and (v) \textit{MaOBO-WS}, our framework using weighted-sum scalarization instead of Chebyshev. For preference query acquisition, we compare \textit{Random}, \textit{Clusterless} (unimodal BALD using the mixture-mean preference vector), \textit{Inter} (mode identification; Eq.~\ref{eq:inter_query}), \textit{Intra} (within-mode refinement; Eq.~\ref{eq:intra_query}), and \textit{Hybrid} (convex combination of Inter and Intra; Eq.~\ref{eq:hybrid_query}). Existing preferential BO methods assume a single latent utility, while random scalarization explores diverse trade-offs without inferring heterogeneous preferences. In contrast, our method explicitly models multi-modal, dynamically switching preference archetypes.


\paragraph{Metrics}
We report (i) \textit{Simple Regret},
$
r_t = U^* - \max_{i\le t} U(f(x_i),w_{\mathrm{true}})
$ (lower is better); (ii) \textit{Archetype Recovery}, measured as Hungarian-aligned mean $\ell_1$ error between inferred and true archetypes; (iii) \textit{Mixture-weight trajectories}, tracking convergence of posterior means $\hat{\eta}$ to the true mixture weights; and (iv) \textit{Per-component $\ell_1$ error}, which reveals mode collapse, label switching, and convergence of individual archetypes.

\section{Results and Analysis}

Figure~\ref{fig:baseline_comparison} (Top) compares simple regret under the true mixture utility across outer iterations, with uncertainty estimate over five seeds. Across benchmarks, the mixture-aware methods achieve faster regret reduction and lower final regret than unimodal and random-scalarization baselines. For instance, \cite{ozaki2023multi} achieves regret as $0.697$ as opposed to $0.282$ with $60\%$ reduction, reflecting that prior methods assume a single utility, whereas proposed method models heterogeneous preferences via latent archetypes. On DTLZ2, all methods improve rapidly due to the smooth landscape, but Inter and Hybrid converge more consistently to near-zero regret with lower variance, while Clusterless and fixed-preference baselines lag slightly, indicating consistent gains from explicit mode identification even in well-behaved settings. On the more irregular WFG benchmark, the separation is clearer: Hybrid exhibits the steepest early decline and the lowest final regret, whereas Clusterless and random scalarization plateau at higher levels, suggesting that preference averaging fails to resolve complex trade-offs. In the chemical process design task, the gap is most pronounced: Hybrid converges faster and stabilizes at lower regret, while unimodal and scalarization baselines converge more slowly and with higher variance, highlighting the benefit of modeling heterogeneous archetypes in structured, multimodal trade-off landscapes. Figure~\ref{fig:baseline_comparison} (Bottom) shows archetype recovery for a persistent user on DTLZ2. Clusterless exhibits a flat curve with a wide band, indicating mode averaging and poor archetype separation. Intra-queries reduce error within active modes but produce unstable overall alignment, reflected in persistent variability. Inter-queries yield a sharp early drop by rapidly disambiguating archetypes, followed by slower refinement. Hybrid combines both effects: rapid initial mode identification followed by steady within-mode calibration, achieving lowest final aligned error and the narrowest uncertainty band.

Figure~\ref{fig:cluster_mixture_weights} illustrates the evolution of inferred mixture weights under persistent preferences with true weights $(0.40,0.35,0.25)$. Hybrid rapidly reallocates probability mass during the early iterations to identify the latent preference modes before converging to stable estimates close to the ground truth. The smallest archetype remains slightly underestimated, reflecting the increased difficulty of learning minority preference modes from limited preference queries. Importantly, no spurious dominant component emerges and the mixture weights stabilize after approximately 20 iterations. Simple regret alone does not reveal how different preference archetypes are explored. We therefore visualize evaluated solutions relative to a reference Pareto front. As shown in Fig.~\ref{fig:cluster_mixture_weights}, Hybrid explores multiple regions of the Pareto frontier before concentrating on the mixture-preferred region. Compared with competing methods, it achieves better localization of solutions near the preferred trade-off surface, particularly along the ROI--Toxicity projection.


\begin{figure}
\centering
\begin{subfigure}{0.23\textwidth}
  \includegraphics[width=\linewidth]{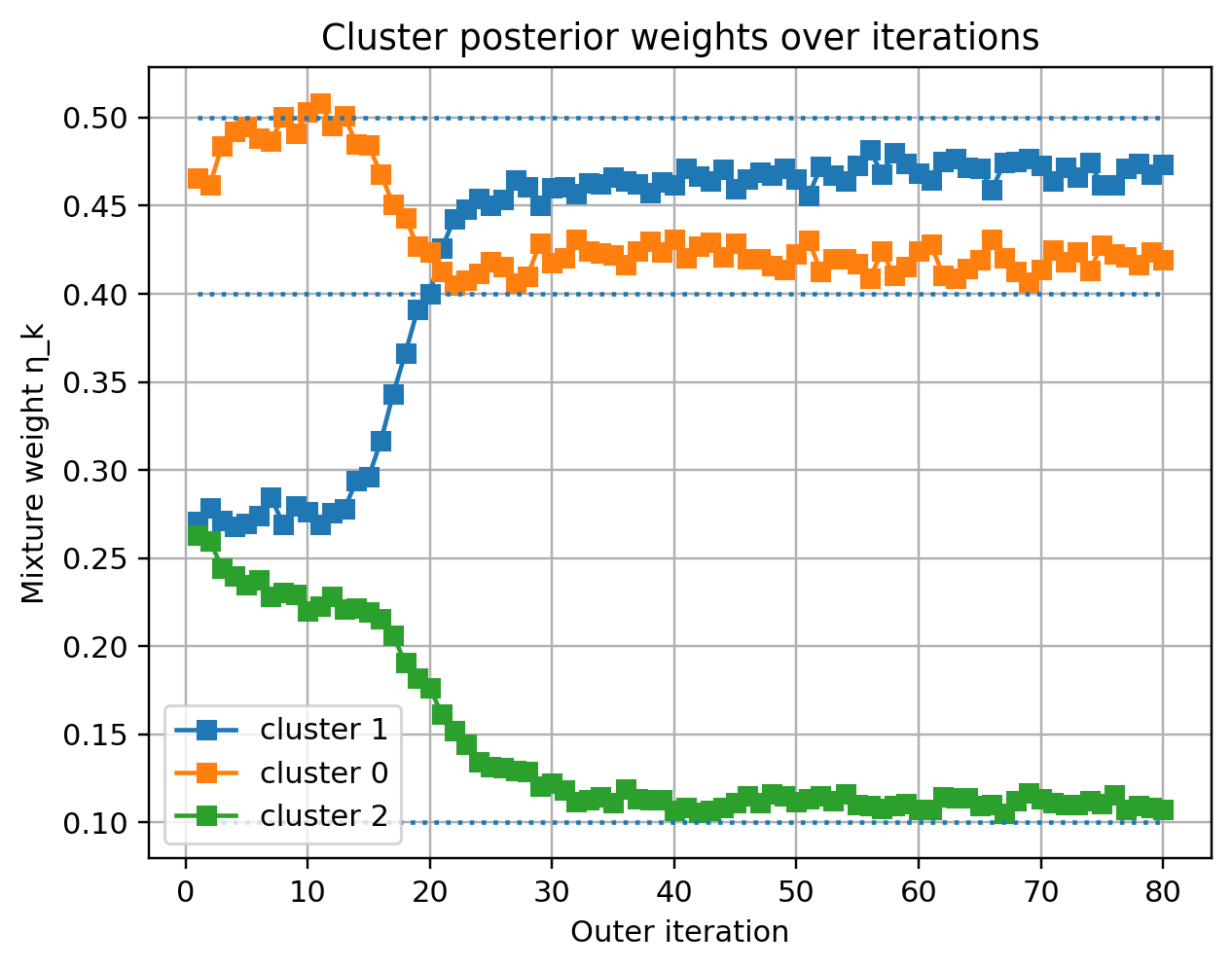}
  \caption{Hybrid: correct mode}
\end{subfigure}
\begin{subfigure}{0.23\textwidth}
  \includegraphics[width=\linewidth]{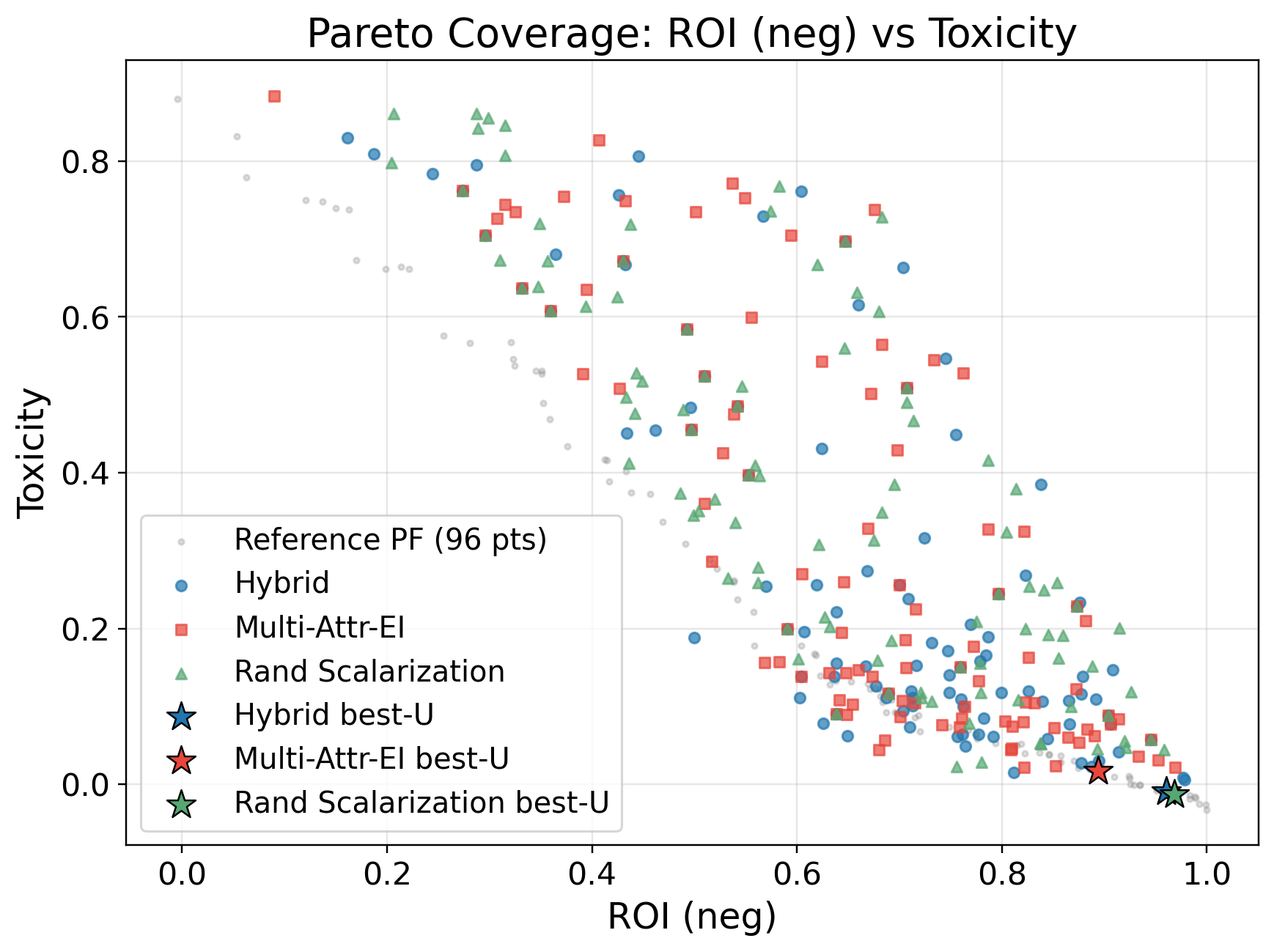}
  \caption{ROI vs Toxicity}
\end{subfigure}
\caption{Mixture weight trajectory for chemical process design process. (left) Hybrid quickly identifies and stabilizes near the correct mode proportions. (right) Grey: reference Pareto front; coloured: evaluated solutions; stars: final best-utility solution. Hybrid converges to the preferred Pareto region.}
\label{fig:cluster_mixture_weights}
\vspace{-5mm}
\end{figure}

\begin{figure}
\centering
\begin{subfigure}{0.23\textwidth}
  \includegraphics[width=\linewidth]{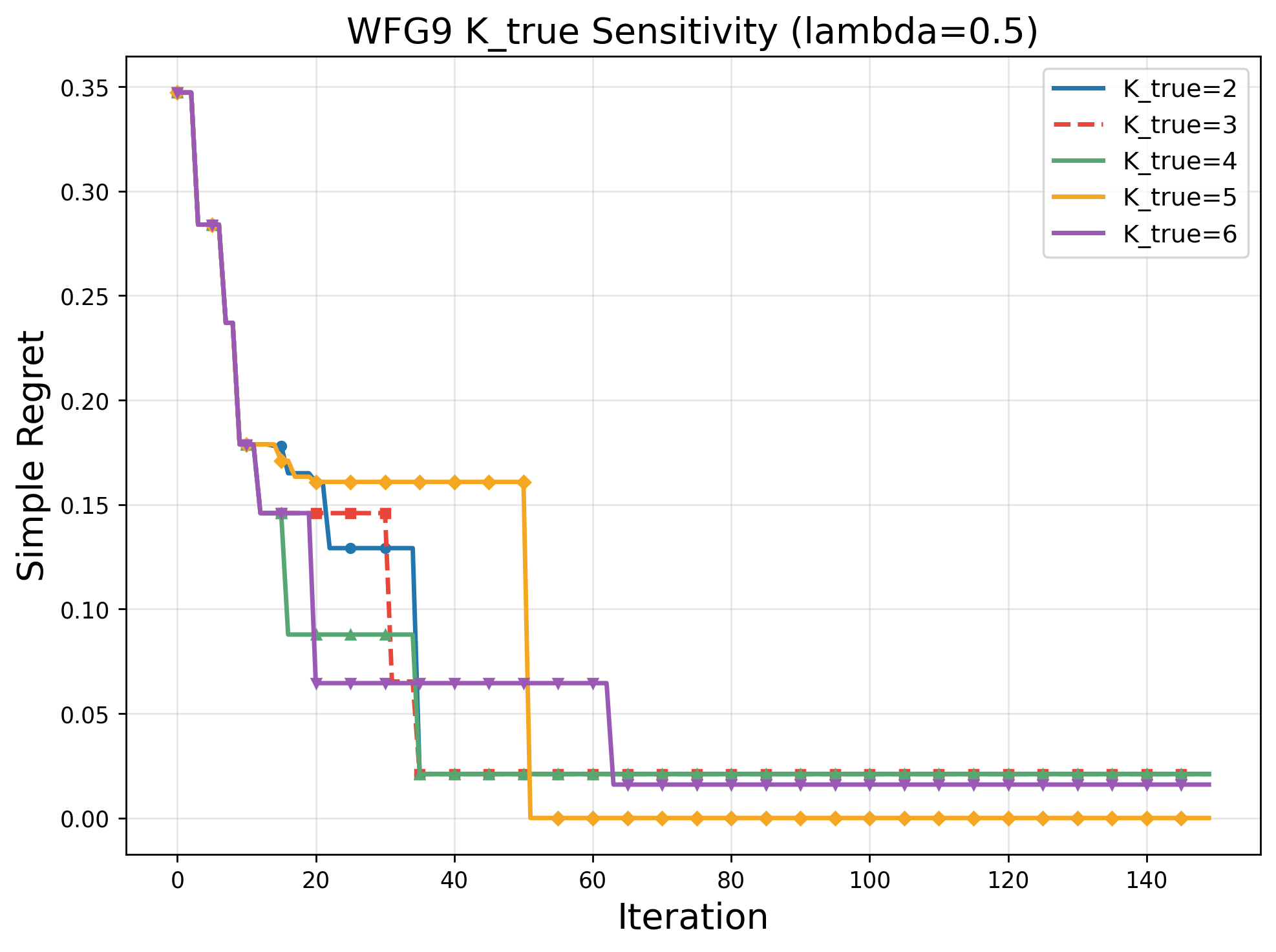}
  \caption{Scalability with increasing preference heterogeneity}
\end{subfigure} 
\begin{subfigure}{0.23\textwidth}
  \includegraphics[width=\linewidth]{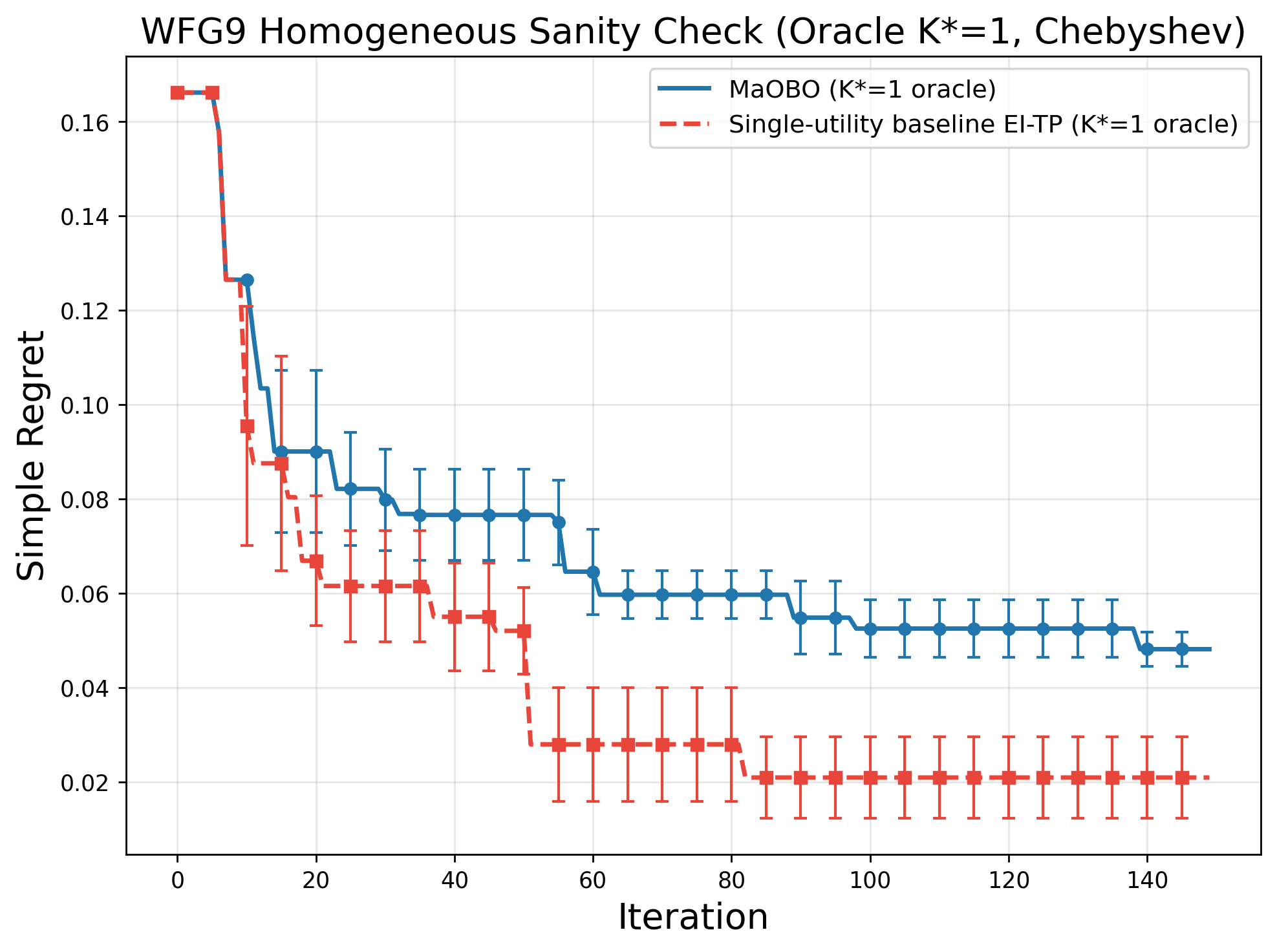}
  \caption{Homogenous Oracle ($K^*=1$)}
\end{subfigure}
\caption{Robustness Analysis for WFG}
\vspace{-3mm}
\label{fig:robustness_analysis}
\vspace{-2mm}
\end{figure}

\paragraph{Robustness and Scalability}

We evaluate the method under varying preference complexity and model structure. Figure~\ref{fig:robustness_analysis}(a) varies the true number of preference archetypes from \(K^\star=2\) to \(K^\star=6\). Increasing \(K^\star\) generally slows convergence because more preference observations are required to distinguish the latent archetypes. Nevertheless, all settings attain low final regret, indicating that the method scales gracefully with increasing preference heterogeneity. Figure~\ref{fig:robustness_analysis}(b) considers a homogeneous oracle with \(K^\star=1\). As expected, a specialized single-utility baseline performs best in this setting, while the proposed mixture model remains competitive by concentrating posterior mass on a single dominant archetype. Additional studies on \(K\)-misspecification, utility misspecification, and sensitivity to the hybrid parameter \(\lambda\) are reported in the Supplementary.

\section{Conclusion}

We introduced a mixture-aware active preference learning framework for many-objective Bayesian optimization that moves beyond the standard assumption of a single latent utility function. By representing decision-maker preferences as latent preference archetypes, the proposed framework jointly infers heterogeneous preference patterns from pairwise comparisons and actively acquires informative preference queries. This representation naturally decomposes preference elicitation into archetype identification and within-archetype refinement, leading to inter-archetype, intra-archetype, and hybrid information-theoretic query strategies that improve preference-guided optimization. Experiments on synthetic benchmarks and a real-world chemical process design case study demonstrate that the proposed framework more efficiently identifies preferred regions of the Pareto front while consistently improving optimization performance over existing preference-based Bayesian optimization methods. Beyond optimization, the learned preference archetypes recover interpretable latent preference archetypes, and the proposed diagnostics quantify archetype recovery and preference calibration beyond conventional regret-based evaluation. 

Together, these results establish latent preference archetypes as a principled foundation for active preference learning in many-objective Bayesian optimization. The proposed framework is orthogonal to existing many-objective Bayesian optimization methods and can be combined with alternative surrogate models, scalarizations, or acquisition functions. While our evaluation uses simulated preferences and a real-world engineering case study, an important next step is validation with human decision makers, broader real-world applications, and adaptive models of non-stationary preferences.

\bibliography{aaai2027}

\clearpage
\appendix

\begin{center}
{\Huge\bfseries Supplementary Material}

\vspace{0.2cm}

\end{center}

\section{Detailed Experimental Details }

\subsection{Datasets}

We evaluate on two standard many-objective benchmarks, DTLZ2~\cite{deb2005scalable} and WFG9~\cite{huband2005scalable}, together with a real-world chemical process design benchmark for polyethylene terephthalate (PET) production~\cite{wang2022machine}.

\paragraph{DTLZ2:}
DTLZ2 uses $L=6$ objectives and $d=7$ decision variables over $[0,1]^7$. All objectives are minimized.

\paragraph{WFG9:}
WFG9 uses $L=8$ objectives and $d=34$ variables ($k=14$, $l=20$). Variables are normalized to $[0,1]$. The benchmark exhibits bias and mixed separability, making it challenging for preference learning.

\paragraph{PET process design:}
The PET benchmark contains 10,000 simulated process designs with 12 normalized decision variables and seven objectives consisting of ROI and six Life Cycle Assessment (LCA) indicators (GWP, terrestrial acidification, water consumption, fossil depletion, surplus ore and human toxicity). We treat the dataset as a fixed black-box many-objective optimization problem.

\subsection{Simulated Datasets}
We use well-known MOO benchmarks for our experimental setup, called DTLZ \cite{deb2005scalable} and WFG suite \cite{huband2005scalable}. We evaluate on DTLZ2 with \(L=6\) objectives and \(d=7\) decision variables
(minimization), search space \([0,1]^d\). At each design \(\mathbf x\in[0,1]^7\) we observe
\(\mathbf y=\mathbf f(\mathbf x)\). We also use WFG9 with $L=8$ objectives and $d=34$ decision variables. Following common practice, we set the WFG position parameter $k=2(L-1)=14$ and distance parameter $l=20$ so $d=k+l$. Variables are scaled to $[0,1]$ and all objectives are minimized. A design
$\mathbf x\!\in\![0,1]^d$ evaluates to an objective vector
$\mathbf y=\mathbf f(\mathbf x)\in\mathbb{R}^8$. WFG9 induces challenging biases and mixed separability, making it a stronger testbed for query policies.

To construct distinct but overlapping preference archetypes, we partition the objectives into disjoint groups and assign structured weight vectors. For the DTLZ suite (\(L=6\)), we define three groups
$
\mathcal G_1=\{0,1\}, \quad 
\mathcal G_2=\{2,3\}, \quad 
\mathcal G_3=\{4,5\}$ while for the WFG suite (\(L=8\)) we define four groups
$
\mathcal G_1=\{0,1\}, \;
\mathcal G_2=\{2,3\}, \;
\mathcal G_3=\{4,5\}, \;
\mathcal G_4=\{6,7\}.
$
Each mode allocates \(80\%\) of its total mass uniformly across its dominant group \(\mathcal G_k\) and distributes the remaining \(20\%\) uniformly over the other objectives. The resulting vector is renormalized to ensure \(\mathbf w_k \in \Delta^{L-1}\). This construction yields clearly separated trade-off profiles while maintaining nonzero sensitivity to all objectives.

Let \(Z_t \in \{1,\dots,K\}\) denote the active preference mode at query \(t\). Under an iid context regime, each query is generated independently according to a fixed mixture \(Z_t \sim \mathrm{Categorical}(\eta^\star)\), with \(\eta^\star=(0.5,0.3,0.2)\) for DTLZ and \(\eta^\star=\{(0.60,0.25,0.15,0.10)\), \(0.4, 0.3, 0.3, 0.1\)\} for WFG. Under a persistent context regime, the mode evolves according to a stay-or-resample process with persistence parameter \(\rho \in [0,1)\). The initial mode is drawn as \(Z_1 \sim \mathrm{Categorical}(\eta^\star)\), and at each subsequent query the mode remains unchanged with probability \(\rho\) and is resampled from \(\mathrm{Categorical}(\eta^\star)\) with probability \(1-\rho\). The expected run length within a single mode is \(1/(1-\rho)\), and the expected number of switches over \(T\) queries is \((1-\rho)(T-1)\).

\subsection{Real-World: PET Process Design}

We evaluate the proposed framework on a real-world multi-objective process design problem for polyethylene terephthalate (PET) production. The dataset comprises 10{,}000 previously evaluated process designs. Each design is represented by a 12-dimensional continuous decision vector $\mathbf x \in \mathbb{R}^{12}$ corresponding to operating and design parameters such as temperatures, pressures, residence times, and feed ratios. Each variable is bounded within physically meaningful limits and normalized to $[0,1]$ for modeling purposes. For each design, we consider $L=7$ objectives consisting of Return on Investment (ROI), life-Cycle Assessment (LCA) indicators, including global warming potential (GWP), terrestrial acidification, water consumption, fossil depletion, surplus ore, and human toxicity. For the PET benchmark we construct three stakeholder archetypes representing economic, environmental and health-oriented priorities. Mixture weights are fixed to
$\eta=(0.40,0.35,0.25)$
with persistence
$\rho=0.8$.

\subsection{Interaction Contexts}

We consider two preference-generation regimes.

\paragraph{I.i.d:}
The active archetype is sampled independently at each comparison,
$
Z_t\sim\mathrm{Categorical}(\eta^\star).
$

\paragraph{Persistent:}
Under the persistent regime, the active archetype remains unchanged with probability $\rho$ and is resampled from $\mathrm{Categorical}(\eta^\star)$ with probability $1-\rho$, yielding temporally consistent preferences with occasional switches. This models temporally consistent preferences with occasional switches.

\subsection{Implementation Details}

Experiments are implemented in Python using JAX, NumPyro, GPy and Optuna. Each objective is modeled by an independent RBF Gaussian process. Preference inference uses a truncated stick-breaking Dirichlet-process mixture optimized with stochastic variational inference (Adam, learning rate $10^{-3}$, 500 updates). Preference queries are selected using the proposed clusterless, inter-, intra-, or hybrid criteria, while new evaluations are chosen via Monte Carlo expected improvement (12 MC samples, 10 L-BFGS-B restarts). All methods share identical initial designs, random seeds, and evaluation budgets. Results are reported as mean$\pm$std over three independent runs. All experiments were conducted on a MacBook Pro equipped with an Apple M2 Pro processor (10-core CPU, 16-core GPU), 32\,GB unified memory, running macOS. The implementation used Python 3.12 with JAX, NumPyro, GPy, and Optuna. No distributed computing or external GPU cluster was used.
\subsection{Additional Metrics}

Besides simple regret, we report 1) Mixture-weight calibration by comparing inferred $\hat{\eta}$ with ground-truth $\eta^\star$ after Hungarian alignment. 2) Preference estimation error using mixture-mean, MAP-cluster and expected $L_1$ errors.

\begin{table}[t]
\centering
\small
\begin{tabular}{ll}
\toprule
Symbol & Meaning \\
\midrule
$L$ & number of objectives \\
$K$ & truncation level (max modes) \\
$\ell$ & objective index \\
$k$ & preference mode index \\
$\mathbf w_k$ & weight vector for mode $k$ \\
$\eta_k$ & mixture weight of mode $k$ \\
$r_i$ & latent mode for comparison $i$ \\
$\sigma_u$ & preference noise parameter \\
\bottomrule
\end{tabular}
\caption{Notation summary.}
\end{table}

\begin{figure}
\centering
\begin{subfigure}{0.23\textwidth}
  \includegraphics[width=\linewidth]{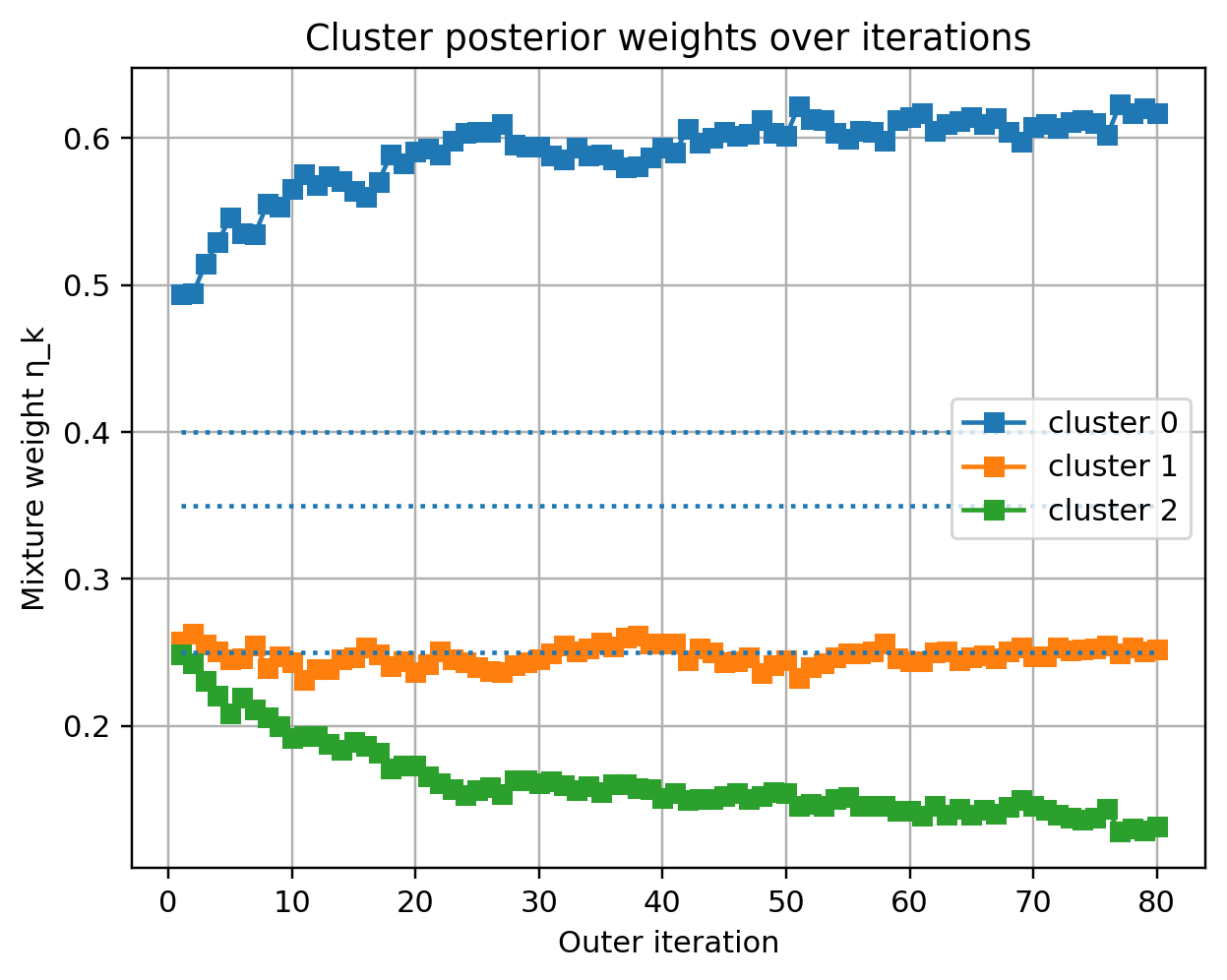}
  \caption{Clusterless}
\end{subfigure}\hfill
\begin{subfigure}{0.23\textwidth}
  \includegraphics[width=\linewidth]{Results/chemistry_exp/maobo-mix_plot_cluster_mixture_weights.png}
  \caption{Hybrid}
\end{subfigure}
\caption{Mixture weight trajectory for PET production process. Clusterless (left) collapses to a dominant component, while Hybrid (right) quickly identifies and stabilizes near the correct mode proportions.}
\label{fig:cluster_mixture_weights}
\end{figure}

\begin{figure}
\centering
\begin{subfigure}{0.23\textwidth}
  \includegraphics[width=\linewidth]{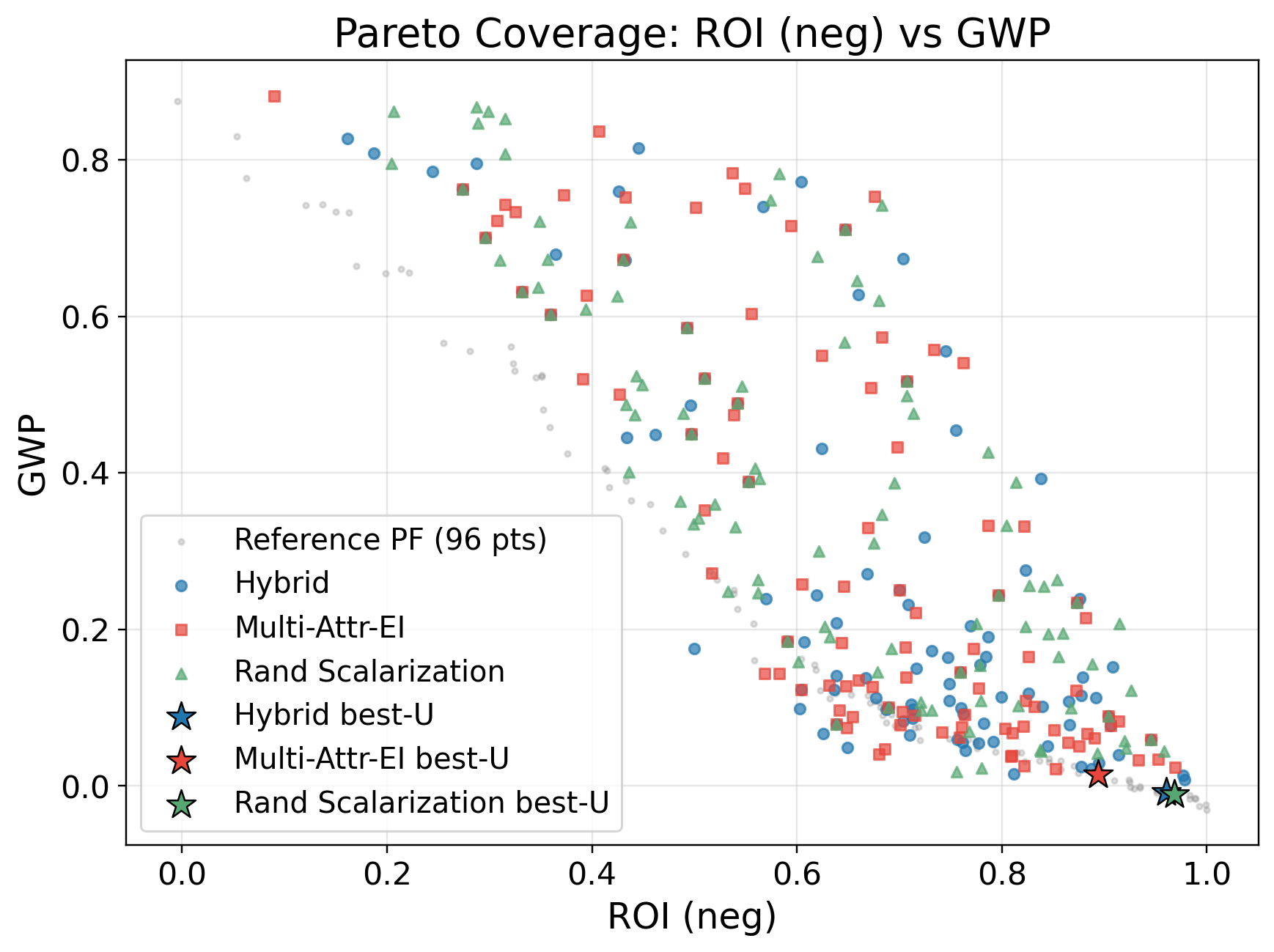}
  \caption{ROI vs GWP}
\end{subfigure} \hfill
\begin{subfigure}{0.23\textwidth}
  \includegraphics[width=\linewidth]{Results/ParetoFrontPlots/pareto_coverage_chemistry_obj0_obj6.png}
  \caption{ROI vs Toxicity}
\end{subfigure}
\caption{Pareto coverage in objective space. Grey points-reference Pareto front; colored points - evaluated solutions; stars - final best-utility solution under the true mixture preference. Efficient trade-offs lie along the grey frontier. Hybrid achieves broader frontier coverage and identifies final solutions closest to the mixture-preferred Pareto region.}
\label{fig:pareto_front}
\end{figure}

\section{Results and Analysis}

\subsection{Additional Preference-Recovery Diagnostics}

Simple regret measures optimization performance under the true mixture
utility, but does not reveal whether the inferred preference model
recovers the underlying archetype structure. We therefore report
additional diagnostics for mixture calibration, archetype recovery,
and localization on the Pareto front.

\paragraph{Mixture-weight recovery:}
Figure~\ref{fig:cluster_mixture_weights} compares the evolution of the
posterior mixture weights on the PET process-design case study. The
Clusterless policy increasingly concentrates probability on a single
component, indicating mode averaging and weak identification of minority
archetypes. In contrast, Hybrid maintains three non-negligible components
and stabilizes substantially closer to the true mixture proportions
$(0.40,0.35,0.25)$. The minority component remains slightly
underestimated, reflecting the difficulty of recovering less frequent
archetypes from a finite number of comparisons.

\paragraph{Localization on the Pareto front:}
Figure~\ref{fig:pareto_front} projects the evaluated PET designs onto the
ROI--GWP and ROI--Toxicity planes. Different archetypes may favour distinct
regions of the Pareto front, which aggregate regret does not expose.
Both Hybrid and the unimodal baseline sample broadly along the projected
efficient frontier, but Hybrid produces a final best-utility solution
slightly closer to the region preferred under the true mixture utility.
These projections provide qualitative evidence that mixture-aware
preference learning improves localization in decision-relevant regions,
rather than merely reducing scalar regret.

\paragraph{Archetype recovery and mode collapse:}
Figure~\ref{fig:metric5} reports component-wise $L_1$ errors after
Hungarian alignment with the true archetypes. Under Clusterless querying,
only one inferred component improves substantially while the remaining
components stay at high error, indicating mode collapse. Inter queries
reduce errors across multiple components by selecting comparisons that
differentiate competing archetypes, although non-monotone adjustments
remain as the inferred component alignment changes during learning.

\paragraph{Mixture calibration across query policies:}
Figure~\ref{fig:metric3} compares inferred and true mixture weights after
alignment. Random querying assigns excessive mass to a single component,
while Intra primarily refines the currently dominant archetype and does
not reliably calibrate the global mixture. Inter querying produces the
closest correspondence between inferred and true component masses,
supporting its intended role in archetype identification.

\begin{figure}
\centering
\begin{subfigure}{0.23\textwidth}
  \includegraphics[width=\linewidth]{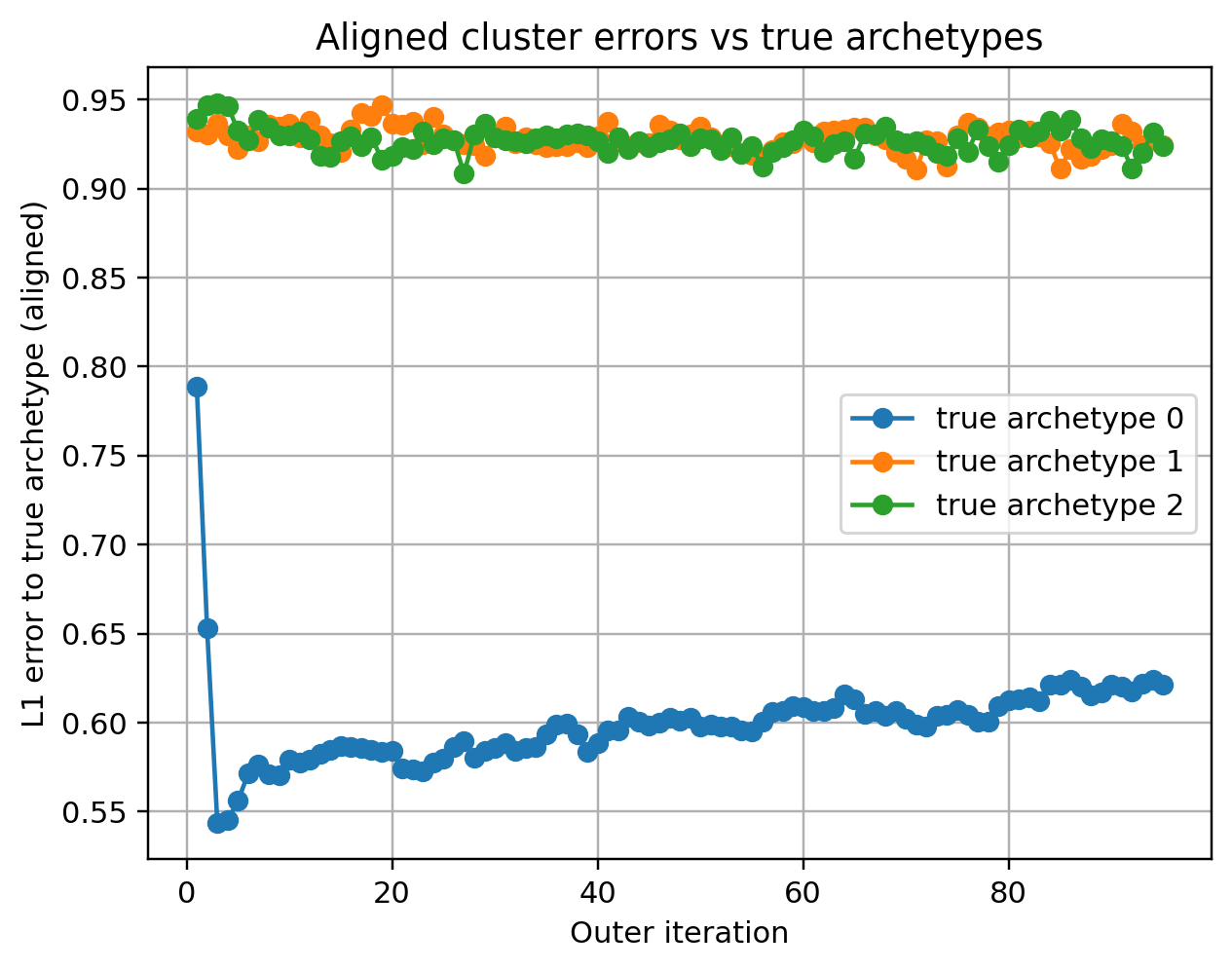}
  \caption{Clusterless}
\end{subfigure}\hfill
\begin{subfigure}{0.23\textwidth}
  \includegraphics[width=\linewidth]{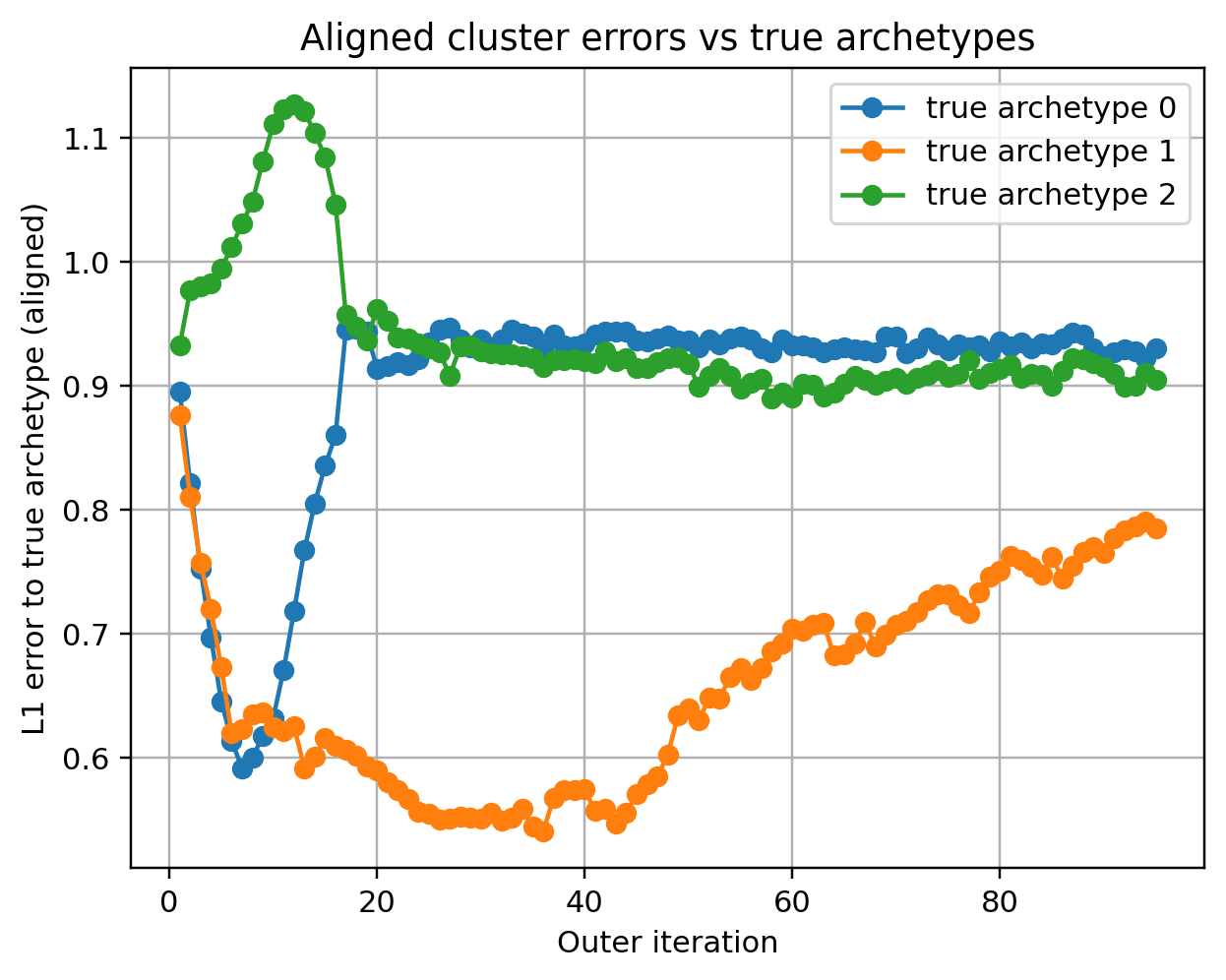}
  \caption{Inter}
\end{subfigure}
\caption{$L_1$ error trajectories per inferred component for a persistent user on DTLZ. After Hungarian alignment (lower is better), Inter queries prevent mode collapse and rapidly reduce error for at least one archetype, while Clusterless largely collapses, with only one component improving and others remaining high}
\label{fig:metric5}
\end{figure}

\begin{figure*}
\centering
\begin{subfigure}{0.33\textwidth}
  \includegraphics[width=\linewidth]{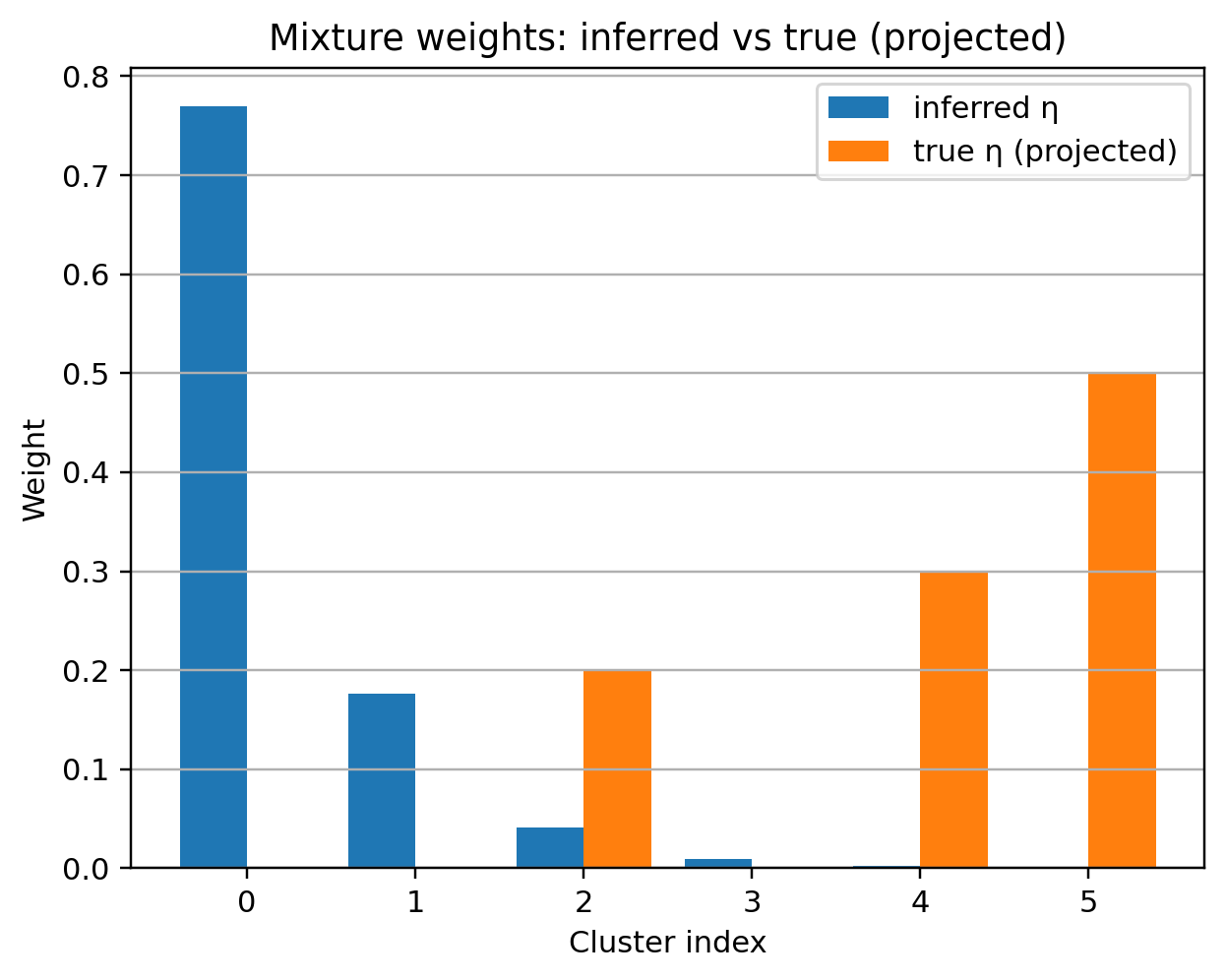}
  \caption{Random}
\end{subfigure}\hfill
\begin{subfigure}{0.33\textwidth}
  \includegraphics[width=\linewidth]{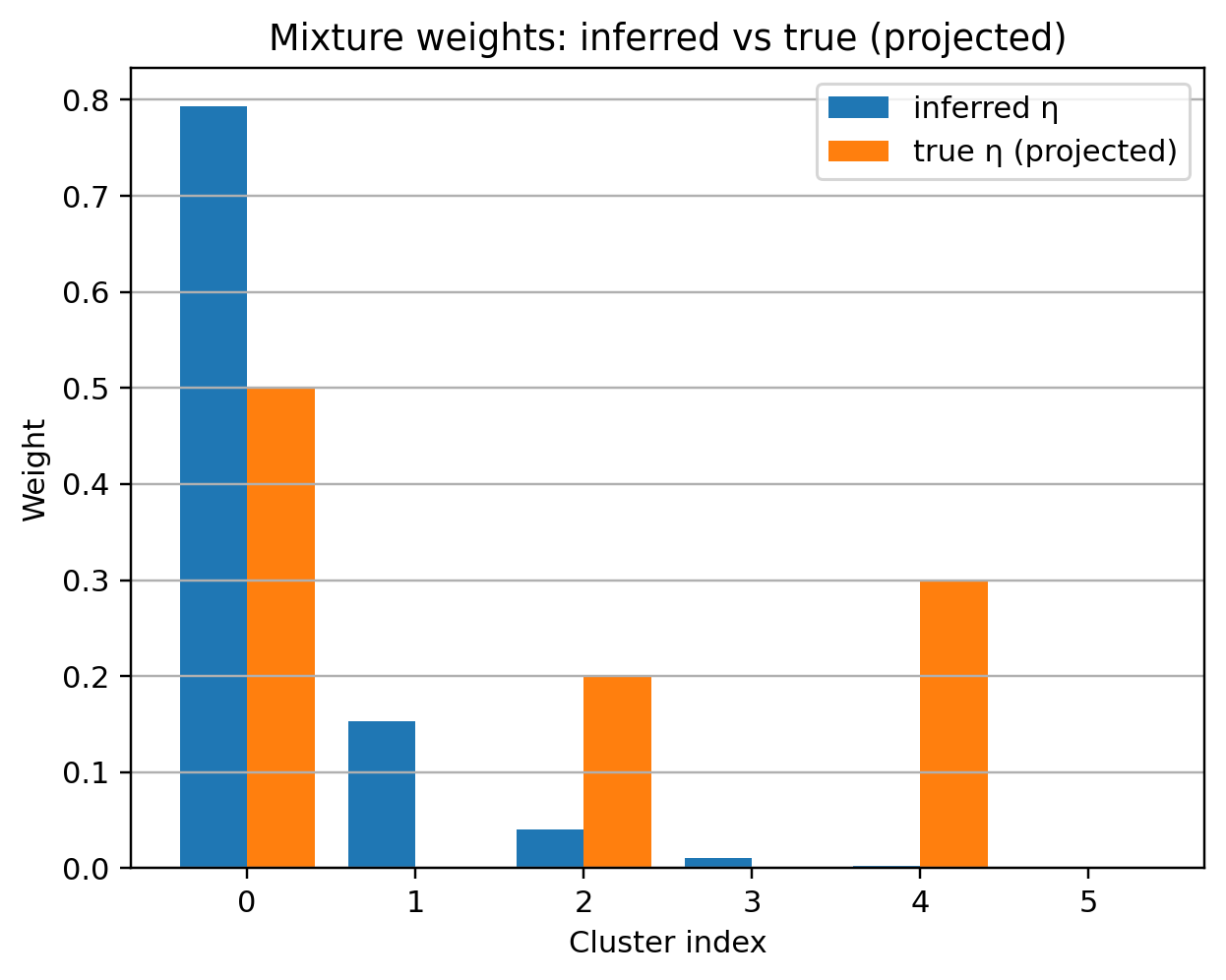}
  \caption{Intra}
\end{subfigure}\hfill
\begin{subfigure}{0.33\textwidth}
  \includegraphics[width=\linewidth]{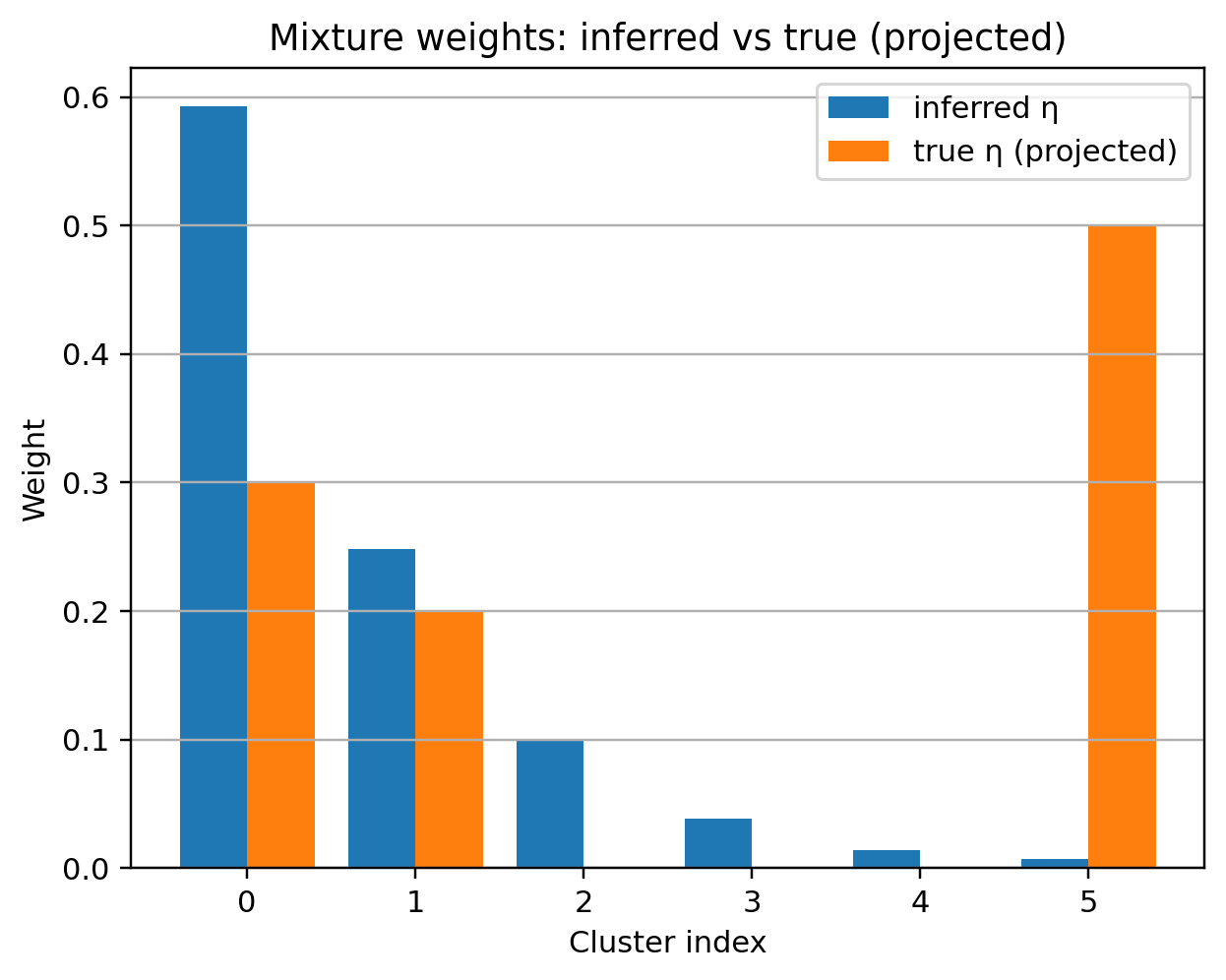}
  \caption{Inter}
\end{subfigure}\hfill
\caption{Mixture-weight calibration for persistent user for DTLZ dataset. Inter queries best match the true mixture, while random collapses to one mode and intra largely ignores 
$\eta$. Blue bars are inferred $\hat{\eta}$, orange bars are projected true $\eta^*$
via Hungarian alignment. (a) Random: blue mass on one index and orange-only bars → mode collapse. (b) Intra: blue focuses on one component → poor allocation across modes. (c) Inter: clear blue–orange alignment → best identification and calibration across modes}
\label{fig:metric3}
\end{figure*}

\subsection{Robustness Analysis}

We evaluate the robustness of the proposed framework under four challenging settings: homogeneous preferences, utility misspecification, varying degrees of preference heterogeneity, and misspecified mixture complexity.

\begin{figure}
\centering
\begin{subfigure}{0.23\textwidth}
  \includegraphics[width=\linewidth]{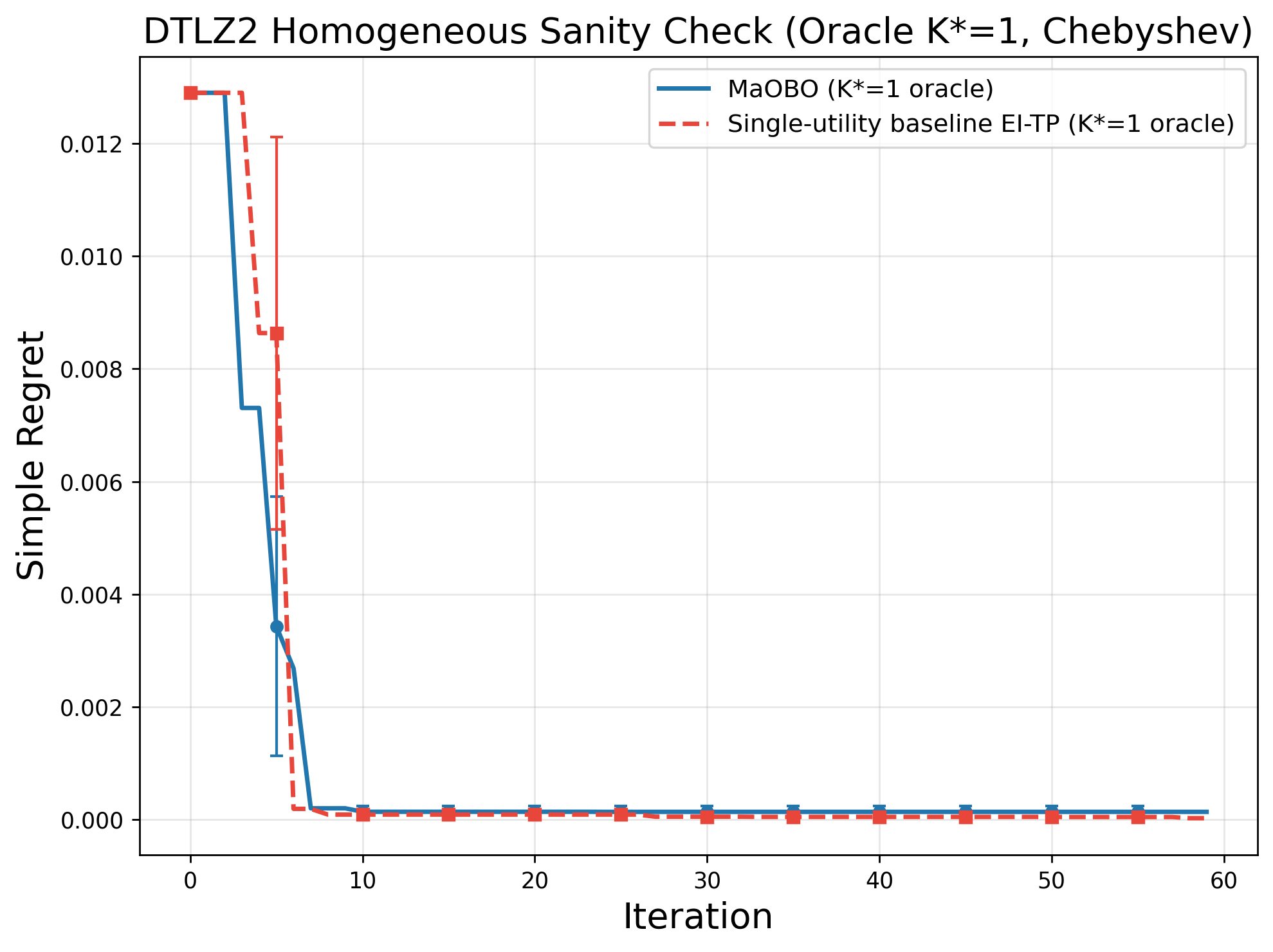}
  \caption{DTLZ}
\end{subfigure}\hfill
\begin{subfigure}{0.23\textwidth}
  \includegraphics[width=\linewidth]{Results/ExtraPlots/wfg_homog_oracle_matched.png}
  \caption{WFG}
\end{subfigure}
\vspace{-2mm}
\caption{Homogenous Oracle ($K^*=1$)}
\label{fig:robustness_homogeneous_oracle}
\vspace{-5mm}
\end{figure}

\begin{figure}
\centering
\begin{subfigure}{0.23\textwidth}
  \includegraphics[width=\linewidth]{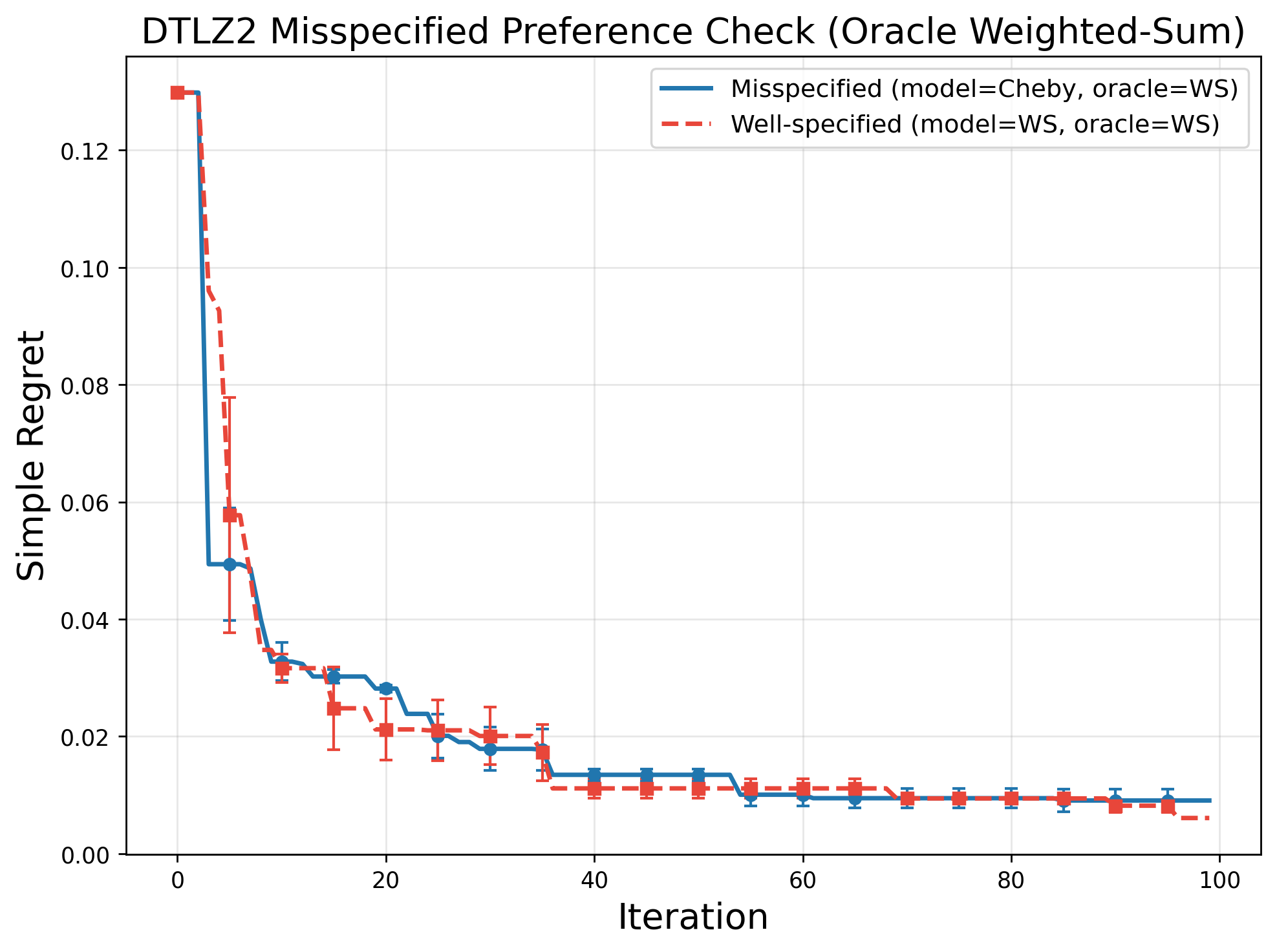}
  \caption{DTLZ}
\end{subfigure}\hfill
\begin{subfigure}{0.23\textwidth}
  \includegraphics[width=\linewidth]{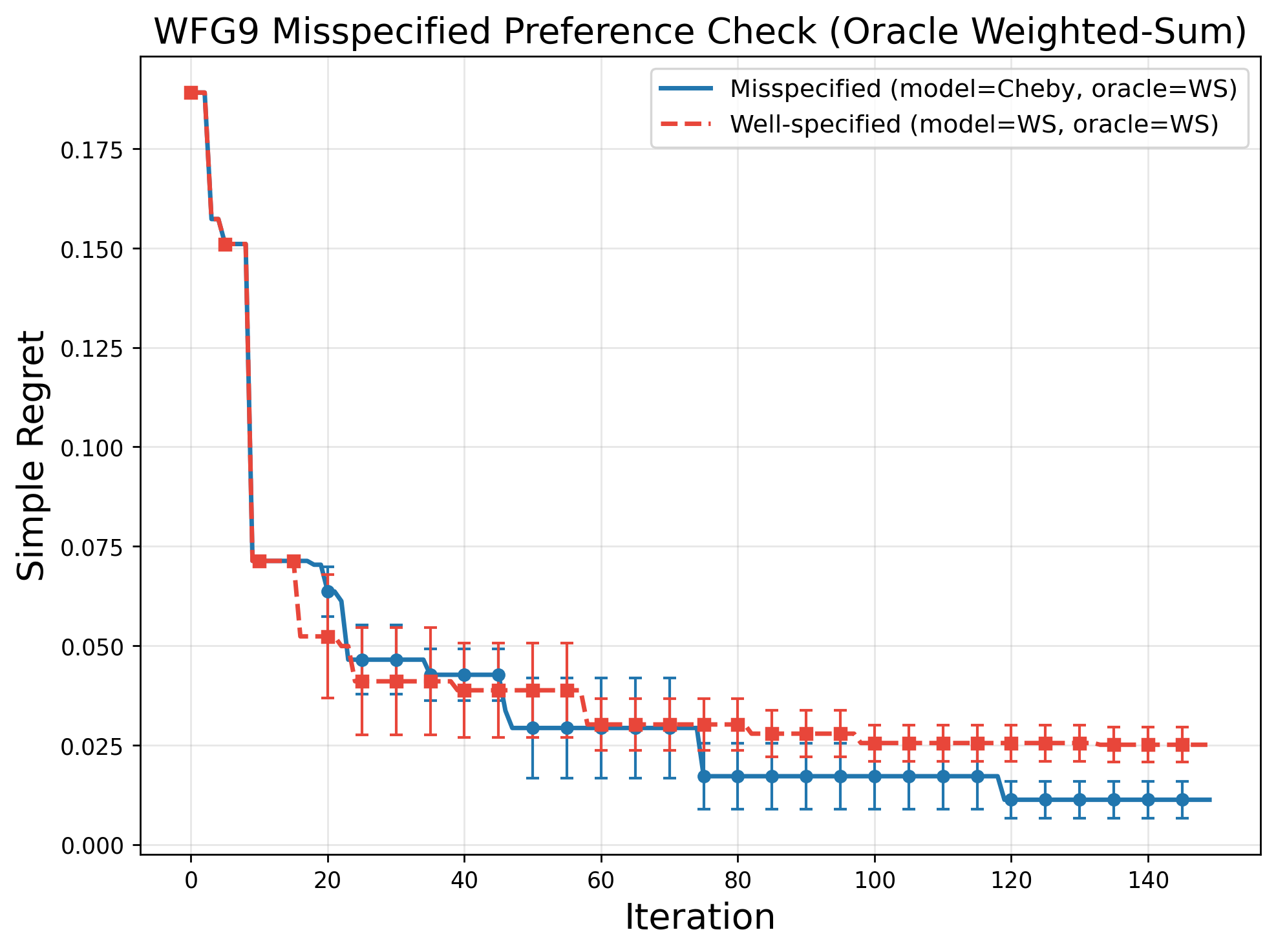}
  \caption{WFG}
\end{subfigure}
\vspace{-2mm}
\caption{Utility Misspecification}
\label{fig:robustness_utility_misspecification}
\vspace{-5mm}
\end{figure}

\begin{figure}
\centering
\begin{subfigure}{0.23\textwidth}
  \includegraphics[width=\linewidth]{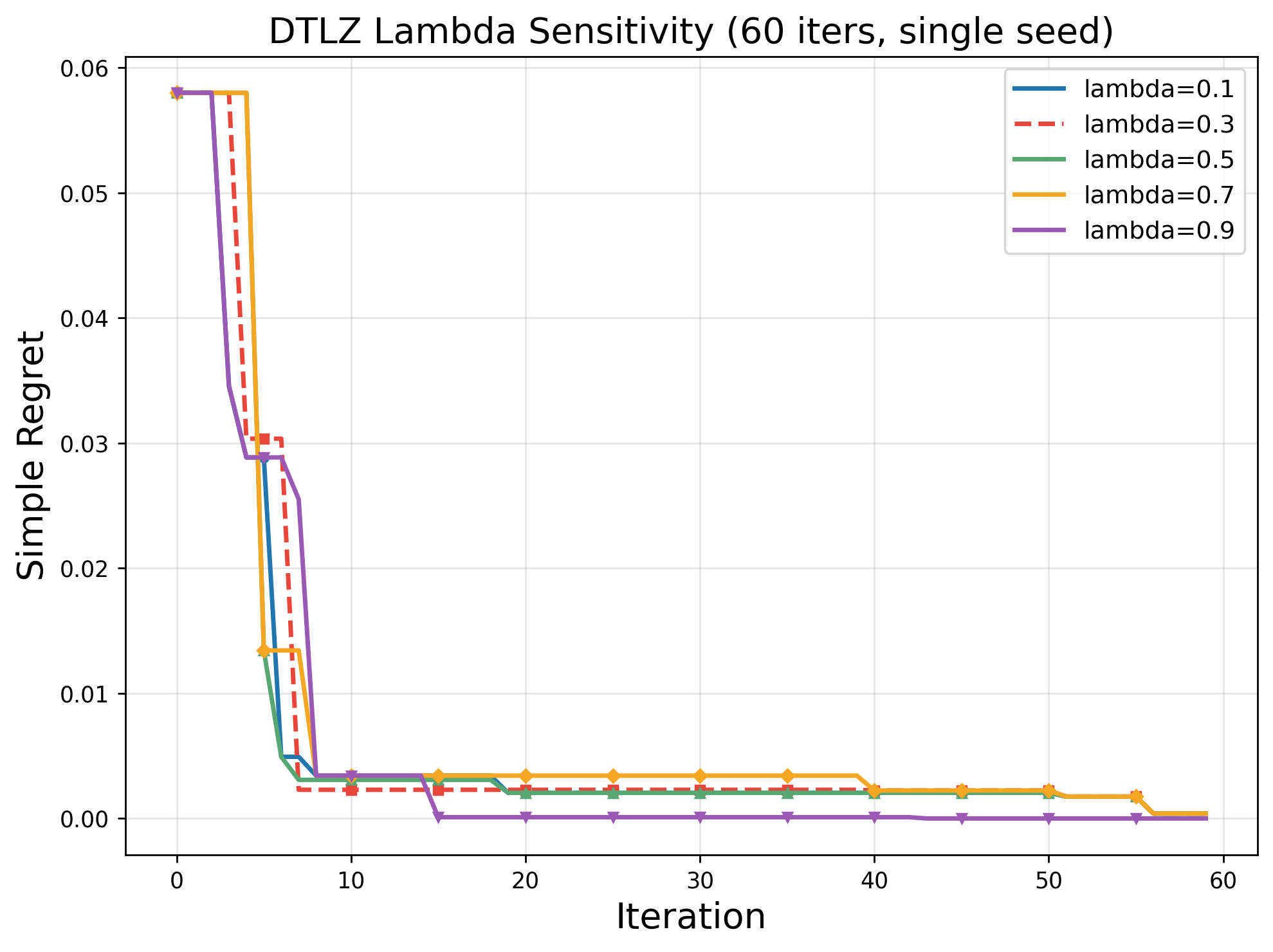}
  \caption{DTLZ Sensitivity}
\end{subfigure}\hfill
\begin{subfigure}{0.23\textwidth}
  \includegraphics[width=\linewidth]{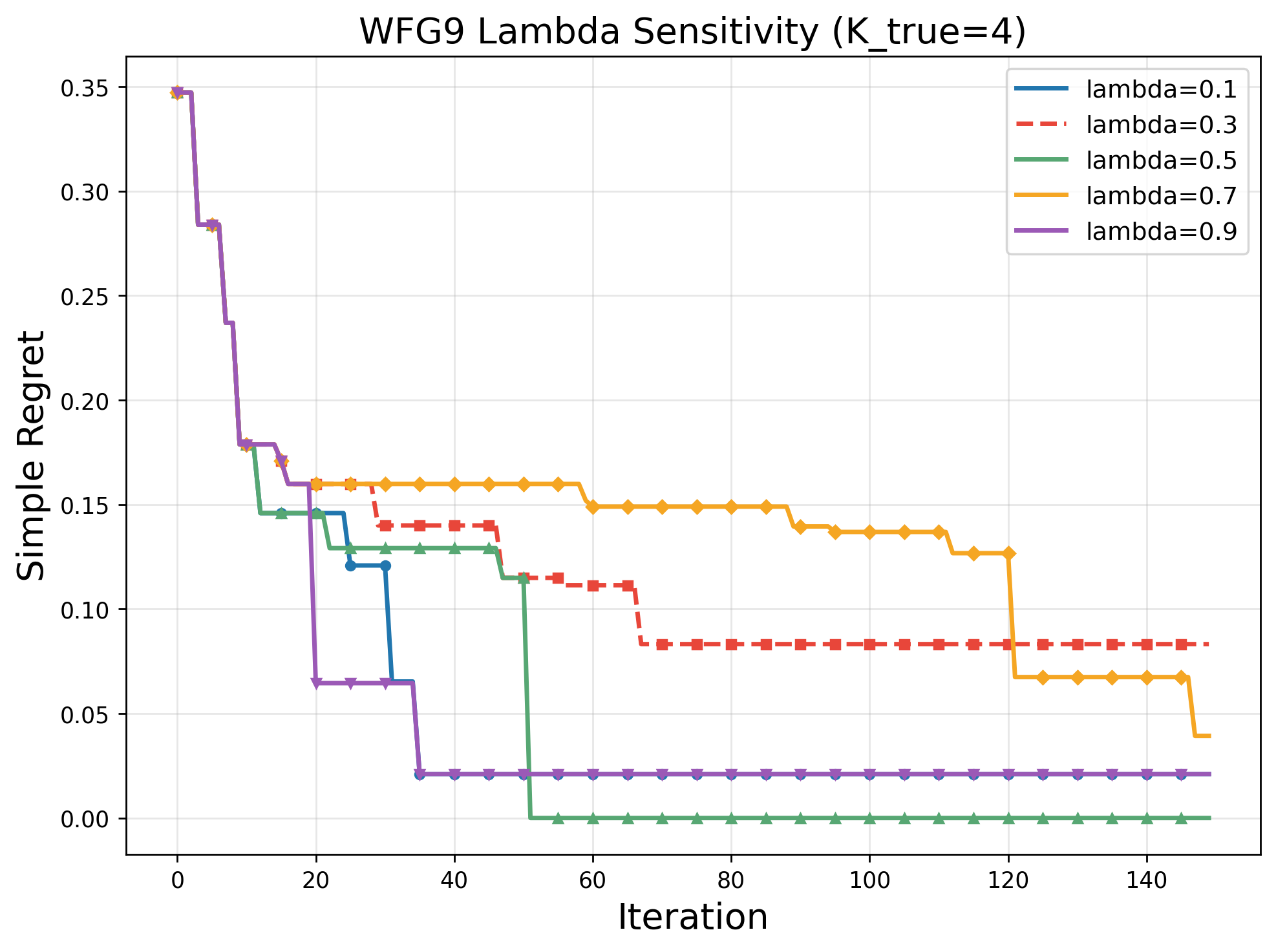}
  \caption{WFG}
\end{subfigure}
\vspace{-2mm}
\caption{Lambda Sensitivity}
\label{fig:robustness_sensitivity_lambda}
\vspace{-5mm}
\end{figure}

\begin{figure}
\centering
\begin{subfigure}{0.23\textwidth}
  \includegraphics[width=\linewidth]{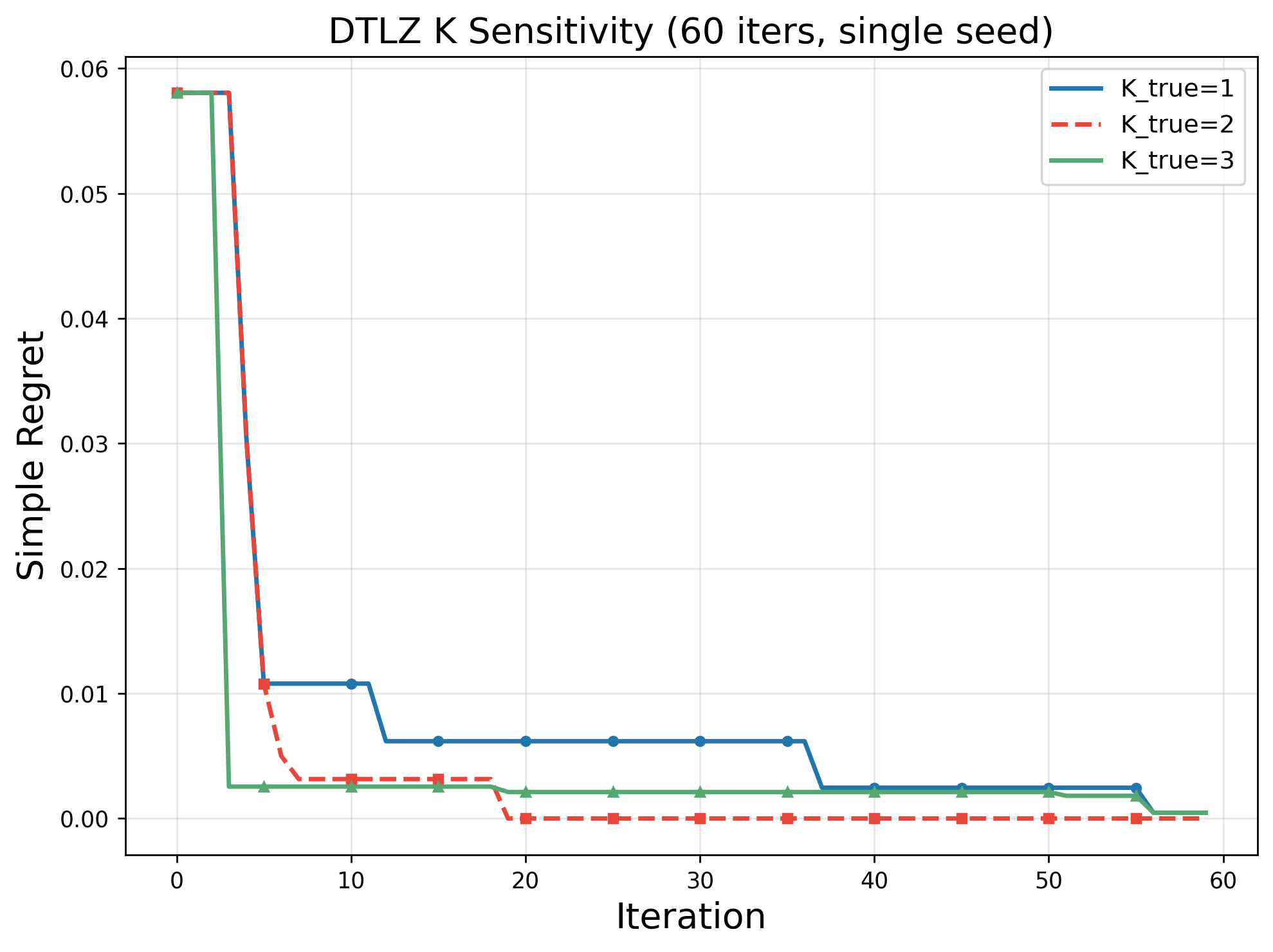}
  \caption{DTLZ}
\end{subfigure}\hfill
\begin{subfigure}{0.23\textwidth}
  \includegraphics[width=\linewidth]{Results/ExtraPlots/wfg_k_sensitivity.png}
  \caption{WFG}
\end{subfigure}
\vspace{-2mm}
\caption{Scalability with increasing preference heterogeneity}
\label{fig:robustness_sensitivity_modes}
\vspace{-5mm}
\end{figure}

\begin{figure}
\centering
\begin{subfigure}{0.23\textwidth}
  \includegraphics[width=\linewidth]{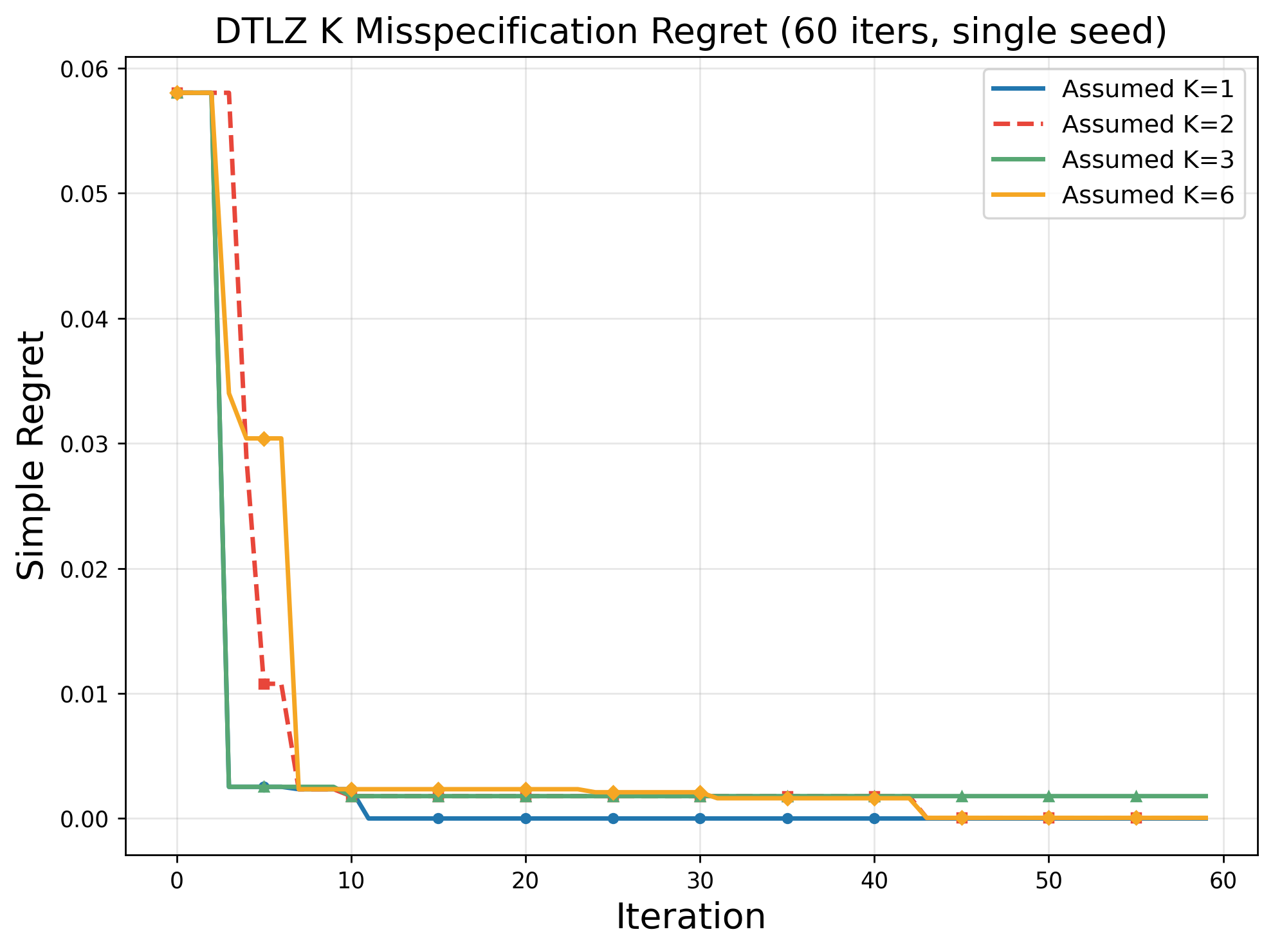}
  \caption{DTLZ}
\end{subfigure}\hfill
\begin{subfigure}{0.23\textwidth}
  \includegraphics[width=\linewidth]{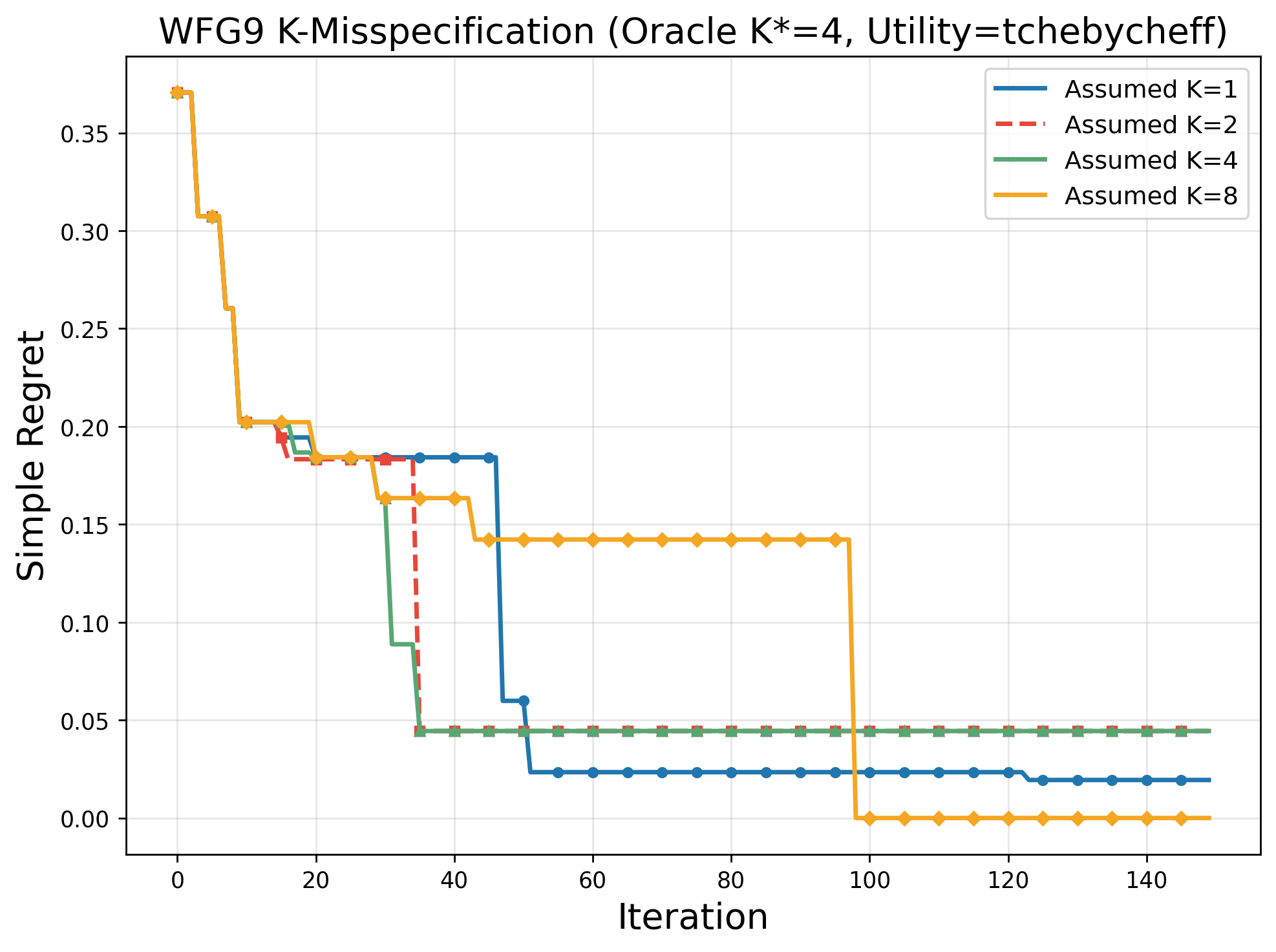}
  \caption{WFG}
\end{subfigure}
\vspace{-2mm}
\caption{Sensitivity to misspecified preference modes}
\label{fig:robustness_sensitivity_pref_missp}
\vspace{-5mm}
\end{figure}

\paragraph{Homogeneous preferences:}
Figure~\ref{fig:robustness_homogeneous_oracle} considers the degenerate case where all decision makers share a single preference archetype ($K^*=1$). As expected, the specialized single-utility baseline performs best because the true preference model is unimodal. Nevertheless, the proposed mixture-based framework remains competitive on both DTLZ2 and WFG, indicating that the additional flexibility introduced by the mixture model does not substantially degrade optimization performance when heterogeneous preferences are absent.

\paragraph{Utility misspecification:}
Figure~\ref{fig:robustness_utility_misspecification} evaluates robustness when the optimization model assumes a different utility representation from the oracle generating preferences (Chebyshev model versus weighted-sum oracle). Performance remains nearly identical to the well-specified setting across both benchmarks, suggesting that the proposed method is not overly dependent on the exact functional form of the utility model. Instead, it primarily exploits the relative preference structure revealed through pairwise comparisons.

\paragraph{Sensitivity to the hybrid query parameter:}
Figure~\ref{fig:robustness_sensitivity_lambda} studies the influence of the hybrid query parameter $\lambda$, which balances inter-mode exploration against intra-mode refinement. Performance is relatively stable across a broad range of values, while intermediate settings ($\lambda \approx 0.3$--$0.5$) consistently achieve the lowest regret. Very small $\lambda$ values focus excessively on local utility refinement before identifying the correct archetype, whereas very large values spend too many queries distinguishing modes at the expense of optimization.

\paragraph{Scalability with increasing preference heterogeneity:}
Figure~\ref{fig:robustness_sensitivity_modes} investigates increasing numbers of latent preference archetypes. As $K_{\mathrm{true}}$ increases, convergence naturally becomes slower because more preference information is required to distinguish among competing archetypes. Despite this increased identification complexity, the final regret remains nearly unchanged, demonstrating that the framework scales gracefully with growing preference diversity.

\paragraph{Misspecified number of preference archetypes:}
Figure~\ref{fig:robustness_sensitivity_pref_missp} evaluates robustness when the assumed number of latent preference modes differs from the true oracle ($K^*=4$). Moderate under-specification produces only a small degradation in performance because similar archetypes are effectively merged. Conversely, over-specification primarily slows convergence, as additional components receive little posterior probability before being discarded during inference. In both cases, the final optimization performance remains close to that of the correctly specified model, indicating robustness to moderate errors in selecting the mixture size.

\section{Theoretical Proofs}
In this section, we detail the proof of the theorem stated in the main paper. We state model setup and assumptions. Then we prove Lipschitz properties of the utility. Further, we discuss surrogate mismatch bound. Further, we combine all the pieces for the regret decomposition theorem. 

\subsection*{Model Assumptions}
We consider $m$ objectives to be \emph{minimized}. For any outcome vector $y\in\mathbb R^m$
and weights $w\in\Delta^{m-1}$, define the Chebyshev utility
\begin{equation}
\label{eq:neg-cheb}
U(y,w)\coloneqq -\min_{1\le j\le m}\frac{y_j}{w_j}.
\end{equation}
Thus larger utility is preferred. We assume all weights are bounded away from the simplex
boundary: there exists $c_w>0$ such that $w_j\ge c_w$ for all $j$ for every true archetype
$w_k^\star$ and every estimated archetype $\hat w_{k,t}$.

Let the true latent preferences be a mixture of $K_\star$ archetypes with weights
$\eta^\star\in\Delta^{K_\star-1}$ and archetypes $\{w_k^\star\}_{k=1}^{K_\star}\subset\Delta^{m-1}$.
Define the true mixture utility at design $x\in\mathcal X$ by
\begin{equation}
\label{eq:true-mix-util}
\mathcal U^\star(x)\coloneqq \sum_{k=1}^{K_\star}\eta_k^\star\,U\!\big(f(x),w_k^\star\big),
\qquad
\mathcal U^\star \coloneqq \sup_{x\in\mathcal X} \mathcal U^\star(x).
\end{equation}

The algorithm fits one GP per objective $f_j$ and maintains posterior mean
$\mu_{t-1,j}(x)$ and standard deviation $\sigma_{t-1,j}(x)$ after $t-1$ evaluations.
Let $\mu_{t-1}(x)\in\mathbb R^m$ be the vector of means. At round $t$, the preference model outputs estimates $\hat\eta_t\in\Delta^{K-1}$
and $\{\hat w_{k,t}\}_{k=1}^K$. Define the surrogate mixture utility used for acquisition
\begin{equation}
\label{eq:surrogate-mix-util}
\widehat{\mathcal U}_t(x)\coloneqq \sum_{k=1}^{K}\hat\eta_{k,t}\,U\!\big(\mu_{t-1}(x),\hat w_{k,t}\big).
\end{equation}
Assume the selected point $x_t$ is $\varepsilon_t$-optimal for the surrogate:
\begin{equation}
\label{eq:eps-opt}
\widehat{\mathcal U}_t(x_t)\ge \sup_{x\in\mathcal X}\widehat{\mathcal U}_t(x)-\varepsilon_t.
\end{equation}

\begin{assumption}[Bounded Objectives]
$\|f(x)\|_\infty \le B_y \quad \forall x\in\mathcal X$
\end{assumption}

\paragraph{Alignment and estimation errors}
Let $\pi_t$ be a permutation (e.g., Hungarian matching) aligning estimated archetypes
to true archetypes. Pad $\eta^\star$ with zeros if $K>K_\star$. Define
\begin{equation}
\label{eq:deltas}
\Delta_t^{w}\coloneqq \sum_{k=1}^{K_\star}\eta_k^\star\,
\|\hat w_{\pi_t(k),t}-w_k^\star\|_1,
\qquad
\Delta_t^{\eta}\coloneqq \|\hat\eta_t-\eta^\star\|_1.
\end{equation}

\begin{lemma}[Lipschitzness of Chebyshev utility]
\label{lem:lipschitz-neg-cheb}
Let $U$ be as in \eqref{eq:neg-cheb} and assume $w_j,w'_j\ge c_w>0$ for all $j$.
Then for any $y,y'\in\mathbb R^m$ and any such $w,w'\in\Delta^{m-1}$,
\begin{align}
\big|U(y,w)-U(y',w)\big|
&\le \frac{1}{c_w}\,\|y-y'\|_\infty,
\label{eq:Lip-y}\\
\big|U(y,w)-U(y,w')\big|
&\le \frac{\|y\|_\infty}{c_w^2}\,\|w-w'\|_1.
\label{eq:Lip-w}
\end{align}
\end{lemma}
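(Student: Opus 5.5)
The plan is to reduce both inequalities to a single elementary fact about the minimum operator, and then bound the coordinatewise differences directly. The fact is that for any vectors $a,b\in\mathbb R^m$,
\[
\Big|\min_{1\le j\le m} a_j-\min_{1\le j\le m} b_j\Big|\le \max_{1\le j\le m}|a_j-b_j|.
\]
I would prove it first. Let $j^\ast$ attain $\min_j b_j$. Then $\min_j a_j\le a_{j^\ast}\le b_{j^\ast}+|a_{j^\ast}-b_{j^\ast}|\le \min_j b_j+\max_j|a_j-b_j|$, and the symmetric argument gives the reverse inequality. Since $U(y,w)=-\min_j y_j/w_j$, the overall minus sign does not affect absolute values. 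Hence
\[
|U(y,w)-U(y',w')|\le \max_j\Big|\frac{y_j}{w_j}-\frac{y'_j}{w'_j}\Big|
\]
for any admissible arguments, and both claims follow by specialising this.

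For \eqref{eq:Lip-y}, take $w'=w$. Each coordinate difference is $|y_j-y'_j|/w_j$. Using $w_j\ge c_w$, this is at most $|y_j-y'_j|/c_w$. Taking the maximum over $j$ gives $\|y-y'\|_\infty/c_w$.

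For \eqref{eq:Lip-w}, take $y'=y$. Each coordinate difference is
\[
|y_j|\,\Big|\frac{1}{w_j}-\frac{1}{w'_j}\Big|=\frac{|y_j|\,|w'_j-w_j|}{w_j w'_j}\le \frac{\|y\|_\infty}{c_w^2}\,|w_j-w'_j|,
\]
using $w_j,w'_j\ge c_w$. Finally, bound $\max_j|w_j-w'_j|\le\|w-w'\|_1$. The simplex constraint is not actually needed; only the lower bound $c_w$ matters.

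There is no real obstacle here. The only point requiring care is the min-difference inequality, since the minimiser may occur at different indices for the two arguments. This is handled by evaluating both minima at the minimiser of one of them, as above.
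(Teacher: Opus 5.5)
Your proof is correct and follows essentially the same route as the paper: bound the coordinatewise differences of $y_j/w_j$, use that a pointwise minimum (or maximum) is $1$-Lipschitz in the largest coordinate difference, then apply $\max_j|w_j-w'_j|\le\|w-w'\|_1$. Your version is slightly cleaner because you prove the min-difference inequality explicitly and track the outer minus sign, whereas the paper writes $U(y,w)=\min_j(-y_j/w_j)$, which is really a maximum; this slip is harmless since the Lipschitz conclusion is the same.
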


\begin{proof}
For \eqref{eq:Lip-y}, fix $w$ and define $g_j(y)\coloneqq -y_j/w_j$. Each $g_j$ is
$(1/c_w)$-Lipschitz under $\|\cdot\|_\infty$ because
$|g_j(y)-g_j(y')|=|y_j-y'_j|/w_j\le \|y-y'\|_\infty/c_w$.
Since $U(y,w)=\min_j g_j(y)$ is the pointwise minimum of $(1/c_w)$-Lipschitz functions,
it is itself $(1/c_w)$-Lipschitz, proving \eqref{eq:Lip-y}.

For \eqref{eq:Lip-w}, fix $y$ and write $U(y,w)=\min_j h_j(w)$ with
$h_j(w)\coloneqq -y_j/w_j$. For any $j$,
\begin{equation}
\begin{aligned}
|h_j(w)-h_j(w')|
&=
|y_j|
\left|
\frac{1}{w_j}-\frac{1}{w'_j}
\right| \\
&=
\frac{|y_j|\,|w_j-w'_j|}{w_jw'_j} \\
&\le
\frac{\|y\|_\infty}{c_w^2}\,|w_j-w'_j|.
\end{aligned}
\end{equation}
Therefore
\begin{equation}
\begin{aligned}
|U(y,w)-U(y,w')|
&\le
\max_j |h_j(w)-h_j(w')| \\
&\le
\frac{\|y\|_\infty}{c_w^2}
\max_j|w_j-w'_j|
\le
\frac{\|y\|_\infty}{c_w^2}\|w-w'\|_1 .
\end{aligned}
\end{equation}
which proves \eqref{eq:Lip-w}.
\end{proof}

\subsection*{GP confidence}

We use a standard high-probability confidence bound for each objective GP.
Assume sub-Gaussian observation noise and conditions under which the standard GP-UCB
confidence event holds for each objective (e.g., \cite{srinivas2010gaussian}).

\begin{assumption}[Per-objective GP confidence event]
\label{ass:gp-conf}
There exists a nondecreasing sequence $\{\beta_t\}$ such that with probability at least
$1-\delta$, simultaneously for all $t\ge 1$, all $x\in\mathcal X$, and all $j\in\{1,\dots,m\}$,
\begin{equation}
\label{eq:gp-per-obj}
|f_j(x)-\mu_{t-1,j}(x)|
\le
\sqrt{\beta_t}\,\sigma_{t-1,j}(x).
\end{equation}
\end{assumption}

Under \eqref{eq:gp-per-obj}, we have for all $x$,
\begin{equation}
\label{eq:vector-gp}
\|f(x)-\mu_{t-1}(x)\|_\infty
\le
\sqrt{\beta_t}\,\|\sigma_{t-1}(x)\|_\infty.
\end{equation}

\subsection*{Main result: simple regret decomposition}

Define the simple regret under the \emph{true} mixture utility:
\begin{equation}
\label{eq:simple-regret}
R_T \coloneqq \mathcal U^\star - \max_{t\le T}\mathcal U^\star(x_t).
\end{equation}

\begin{theorem}[Simple regret decomposition]
\label{thm:regret-decomp-safe}
On the GP confidence event in Assumption~\ref{ass:gp-conf}, for any $T\ge 1$,
\begin{equation}
\label{eq:regret-decomp}
\begin{aligned}
R_T \le\;&
\underbrace{\frac{2}{c_w}\min_{t\le T}
\|f(x_t)-\mu_{t-1}(x_t)\|_\infty}_{\textnormal{Objective GP error}}
+
\underbrace{\frac{2B_y}{c_w^2}\min_{t\le T}\Delta_t^{w}}_{\textnormal{Archetype error}}
\\[1mm]
&
+
\underbrace{\frac{2B_y}{c_w}\min_{t\le T}\Delta_t^{\eta}}_{\textnormal{Mixture-weight error}}
+
\underbrace{\max_{t\le T}\varepsilon_t}_{\textnormal{Acquisition optimization error}}.
\end{aligned}
\end{equation}
Moreover, still on the same event, the objective GP term admits an existence-type rate:
\begin{equation}
\label{eq:gp-rate-existence}
\begin{aligned}
\min_{t\le T}\|f(x_t)-\mu_{t-1}(x_t)\|_\infty
&\le
\frac{1}{T}\sum_{t=1}^T
\|f(x_t)-\mu_{t-1}(x_t)\|_\infty
\\
&\lesssim
\sqrt{\frac{\beta_T\,\gamma_T}{T}}.
\end{aligned}
\end{equation}
where $\gamma_T$ is a (max) information-gain term for the objective GPs (taking the maximum
over objectives if kernels differ), and $\lesssim$ hides constants depending only on the kernels and noise.
\end{theorem}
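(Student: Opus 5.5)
The plan is to fix an arbitrary round $t\le T$ and bound the single-round gap $\mathcal U^\star-\mathcal U^\star(x_t)$. Since $R_T\le \mathcal U^\star-\mathcal U^\star(x_t)$ for every $t$, we may then take the minimum over $t$ of the resulting bound.

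For the fixed round $t$, let $x^\star$ be a near-maximizer of $\mathcal U^\star$; a limiting argument, or compactness plus continuity, removes the slack. We telescope
\begin{equation*}
\mathcal U^\star(x^\star)-\mathcal U^\star(x_t)
=\big[\mathcal U^\star(x^\star)-\widehat{\mathcal U}_t(x^\star)\big]
+\big[\widehat{\mathcal U}_t(x^\star)-\widehat{\mathcal U}_t(x_t)\big]
+\big[\widehat{\mathcal U}_t(x_t)-\mathcal U^\star(x_t)\big].
\end{equation*}
By the $\varepsilon_t$-optimality condition \eqref{eq:eps-opt}, the middle bracket is at most $\varepsilon_t\le\max_{s\le T}\varepsilon_s$.

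Each outer bracket is a uniform mismatch $|\mathcal U^\star(x)-\widehat{\mathcal U}_t(x)|$. We split it into three parts by passing through two intermediate quantities:
\begin{itemize}
\item[(i)] replace $f(x)$ by $\mu_{t-1}(x)$ inside every utility;
\item[(ii)] replace each $w_k^\star$ by its aligned estimate $\hat w_{\pi_t(k),t}$;
\item[(iii)] replace $\eta^\star$ by $\hat\eta_t$, after padding with zeros and permuting by $\pi_t$.
\end{itemize}
For step (i), the first part of Lemma~\ref{lem:lipschitz-neg-cheb}, \eqref{eq:Lip-y}, together with $\sum_k\eta_k=1$ gives a bound of $\frac1{c_w}\|f(x)-\mu_{t-1}(x)\|_\infty$. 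For step (ii), the second part, \eqref{eq:Lip-w}, gives $\sum_k\eta_k^\star\frac{\|y\|_\infty}{c_w^2}\|\hat w_{\pi_t(k)}-w_k^\star\|_1$, which is $\lesssim \frac{B_y}{c_w^2}\Delta_t^w$. For step (iii), we use $|U(y,w)|\le \|y\|_\infty/c_w$, which holds because $w_j\le 1$ and $w_j\ge c_w$. Hölder's inequality then gives $\frac{B_y}{c_w}\Delta_t^\eta$. Summing the two outer brackets produces the factors of $2$ in \eqref{eq:regret-decomp}.

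The \textbf{main obstacle} is that the theorem's GP term is evaluated only at $x_t$, whereas the first bracket involves $x^\star$. A plain Lipschitz argument at $x^\star$ would leave $\|f(x^\star)-\mu_{t-1}(x^\star)\|_\infty$, which is not controlled. I would first try to handle it with an optimism/UCB-type argument. On the event of Assumption~\ref{ass:gp-conf}, $f_j(x^\star)\ge \mu_{t-1,j}(x^\star)-\sqrt{\beta_t}\sigma_{t-1,j}(x^\star)$. Since $U$ is monotone decreasing in each $y_j$, this means the utility of the mean, shifted by the confidence width, upper-bounds the true utility at $x^\star$. The gap is then transferred to $x_t$ through the acquisition, so that only widths at $x_t$ appear. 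If this fails because the acquisition uses $\mu$ rather than a UCB, a fallback is to interpret the GP term as $\sup_x\|f(x)-\mu_{t-1}(x)\|_\infty$, or to absorb the discrepancy into $\varepsilon_t$.

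A second point to check is that $\|\mu_{t-1}(x)\|_\infty$ may exceed $B_y$. I would handle this either by clipping the means or by bounding them by $B_y+\sqrt{\beta_t}\|\sigma\|_\infty$ and absorbing the excess into the constants.

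Finally, for \eqref{eq:gp-rate-existence} we bound the minimum by the average. On the confidence event, $\|f(x_t)-\mu_{t-1}(x_t)\|_\infty\le\sqrt{\beta_t}\max_j\sigma_{t-1,j}(x_t)\le\sqrt{\beta_T}\sum_j\sigma_{t-1,j}(x_t)$. Cauchy–Schwarz gives $\sum_t\sigma_{t-1,j}(x_t)\le\sqrt{T\sum_t\sigma^2_{t-1,j}(x_t)}$, and the standard lemma of \cite{srinivas2010gaussian} gives $\sum_t\sigma_{t-1,j}^2(x_t)\le C\gamma_T$. Dividing by $T$ yields $\sqrt{\beta_T\gamma_T/T}$, with the factor $m$ absorbed into the constants.
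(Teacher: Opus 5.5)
Your skeleton is the paper's: the same three-term telescope through $\widehat{\mathcal U}_t$ at $x^\star$ and $x_t$, the same split of $|\mathcal U^\star(x)-\widehat{\mathcal U}_t(x)|$ via Lemma~\ref{lem:lipschitz-neg-cheb}, and the same min-$\le$-average plus variance-sum argument for the rate, which is fine. The obstacle you flag is a genuine gap, and the paper does not close it either. Its proof reaches exactly the per-round bound containing $\frac{1}{c_w}\sum_{z\in\{x^\star,x_t\}}\|f(z)-\mu_{t-1}(z)\|_\infty$. It then asserts \eqref{eq:regret-decomp}, remarking only that the confidence bound also holds at $x^\star$; that does not turn the $x^\star$ term into an $x_t$ term.

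Your optimism route cannot close it. Monotonicity gives $U(f(x^\star),w)\le U(\mu_{t-1}(x^\star)-\sqrt{\beta_t}\sigma_{t-1}(x^\star),w)$. But \eqref{eq:eps-opt} only says $x_t$ nearly maximizes the mean-based $\widehat{\mathcal U}_t$. Passing from the optimistic surrogate back to $\widehat{\mathcal U}_t$ at $x^\star$ therefore reintroduces $\sqrt{\beta_t}\|\sigma_{t-1}(x^\star)\|_\infty/c_w$.

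In fact nothing can close it, because the first inequality is false as stated. Take $T=1$ with prior mean $\mu_0\equiv 0$:
\begin{itemize}
\item Then $\widehat{\mathcal U}_1\equiv 0$, so any $x_1$ is an exact maximizer and $\varepsilon_1=0$.
\item Take exact preference estimates, so both $\Delta$ terms vanish.
\item Take an RKHS $f$ with $f(x_1)=0$ and $f_j(x')<0$ for all $j$ at some $x'$. The $t=1$ confidence condition holds once $\sqrt{\beta_1}\ge B_f$, since $|f_j(x)|\le\|f_j\|_{\mathcal H_j}\sigma_0(x)$.
\end{itemize}
The right-hand side is then $0$, while $R_1\ge\mathcal U^\star(x')>0$.

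So your fallbacks are not optional; the statement itself must change. One option is to keep the $x^\star$ term, or use $\sup_x\|f(x)-\mu_{t-1}(x)\|_\infty$, which the variance-sum rate does not control. The other is to make the acquisition optimistic: require $x_t$ to be $\varepsilon_t$-optimal for $\sum_k\hat\eta_{k,t}\,U(\mu_{t-1}(x)-\sqrt{\beta_t}\sigma_{t-1}(x),\hat w_{k,t})$. Then your monotonicity argument goes through, and the GP term becomes $\frac{2\sqrt{\beta_t}}{c_w}\|\sigma_{t-1}(x_t)\|_\infty$. ``Absorbing the discrepancy into $\varepsilon_t$'' is only a relabeling, since $\varepsilon_t$ is pinned down by \eqref{eq:eps-opt}.

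Two further points. First, taking the minimum over $t$ of your per-round bound gives the minimum of the sum of the four terms, not the sum of separate minima in \eqref{eq:regret-decomp}. Since $\min_t(a_t+b_t)\ge\min_t a_t+\min_t b_t$, the separated form does not follow; the paper makes the same leap. You need a common round. Either keep the minimum outside the sum, or use $\min_t(a_t+b_t)\le\frac1T\sum_t a_t+\max_t b_t$ with the preference terms bounded by their maxima, as is already done for $\varepsilon_t$; this also combines directly with the rate.

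Second, your concern about $\|\mu_{t-1}(x)\|_\infty\le B_y$ is justified, since the paper simply asserts it. Neither of your fixes is clean, though: clipping changes the algorithm, and $B_y+\sqrt{\beta_t}\|\sigma_{t-1}(x)\|_\infty$ grows with $\beta_t$, so it cannot be absorbed into constants depending only on $B_y$ and $c_w$. Reorder the telescope instead. Do the $w$- and $\eta$-swaps at the true $f(x)$, where $\|f(x)\|_\infty\le B_y$ holds by assumption. Do the $f\to\mu_{t-1}$ swap last, under $(\hat\eta_t,\{\hat w_{k,t}\})$, using \eqref{eq:Lip-y} and $\sum_k\hat\eta_{k,t}=1$. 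This yields the same constants with no bound on $\mu_{t-1}$ needed, and it is also what makes the optimistic variant work.

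Finally, your aligned $\Delta_t^\eta$, padded and permuted by $\pi_t$, is the right quantity; the third line of the paper's decomposition tacitly takes $\pi_t$ to be the identity.
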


\begin{proof}
Let $x^\star\in\arg\max_{x\in\mathcal X}\mathcal U^\star(x)$. Fix any round $t\ge 1$.
Add and subtract the surrogate utility \eqref{eq:surrogate-mix-util} at $x^\star$ and $x_t$:
\begin{equation}
\label{eq:gp-rate-existence}
\begin{aligned}
\min_{t\le T}\|f(x_t)-\mu_{t-1}(x_t)\|_\infty
&\le
\frac{1}{T}\sum_{t=1}^T
\|f(x_t)-\mu_{t-1}(x_t)\|_\infty
\\
&\lesssim
\sqrt{\frac{\beta_T\,\gamma_T}{T}}.
\end{aligned}
\end{equation}
where $\gamma_T$ denotes the maximum information gain of the objective GPs
(maximized over objectives if different kernels are used), and $\lesssim$
hides constants depending only on the kernels and observation noise.
By $\varepsilon_t$-optimality \eqref{eq:eps-opt}, the middle term is at most $\varepsilon_t$.
Therefore,
\begin{equation}
\label{eq:two-mismatch}
\mathcal U^\star(x^\star)-\mathcal U^\star(x_t)
\le
\big|\mathcal U^\star(x^\star)-\widehat{\mathcal U}_t(x^\star)\big|
+
\big|\mathcal U^\star(x_t)-\widehat{\mathcal U}_t(x_t)\big|
+
\varepsilon_t.
\end{equation}
It remains to bound $|\mathcal U^\star(x)-\widehat{\mathcal U}_t(x)|$ for a generic $x$.

\textbf{Decomposition}
Using \eqref{eq:true-mix-util} and \eqref{eq:surrogate-mix-util},
\[
\mathcal U^\star(x)-\widehat{\mathcal U}_t(x)
=
\sum_{k=1}^{K_\star}\eta_k^\star U(f(x),w_k^\star)
-
\sum_{k=1}^{K}\hat\eta_{k,t} U(\mu_{t-1}(x),\hat w_{k,t}).
\]
Insert and subtract $\sum_{k=1}^{K_\star}\eta_k^\star U(\mu_{t-1}(x),w_k^\star)$ and
$\sum_{k=1}^{K_\star}\eta_k^\star U(\mu_{t-1}(x),\hat w_{\pi_t(k),t})$, then apply triangle inequality:
\begin{equation}
\label{eq:mismatch-decomp}
\begin{aligned}
\big|\mathcal U^\star(x)-\widehat{\mathcal U}_t(x)\big|
\le\;&
\sum_{k=1}^{K_\star}\eta_k^\star
\big|U(f(x),w_k^\star)-U(\mu_{t-1}(x),w_k^\star)\big|
\\
&+
\sum_{k=1}^{K_\star}\eta_k^\star
\big|U(\mu_{t-1}(x),w_k^\star)
-U(\mu_{t-1}(x),\hat w_{\pi_t(k),t})\big|
\\
&+
\left|
\sum_{k=1}^{K}
(\eta_k^\star-\hat\eta_{k,t})
\right.
\\
&\qquad\left.
\times\,U(\mu_{t-1}(x),\hat w_{k,t})
\right|.
\end{aligned}
\end{equation}

\textbf{Bounding terms using Lemma~\ref{lem:lipschitz-neg-cheb}}
For the first line, apply Lipschitzness in $y$ \eqref{eq:Lip-y}:
\[
\big|U(f(x),w_k^\star)-U(\mu_{t-1}(x),w_k^\star)\big|
\le
\frac{1}{c_w}\,\|f(x)-\mu_{t-1}(x)\|_\infty.
\]
Summing over $k$ with weights $\eta_k^\star$ gives
\begin{equation}
\label{eq:termA}
\sum_{k=1}^{K_\star}\eta_k^\star
\big|U(f(x),w_k^\star)-U(\mu_{t-1}(x),w_k^\star)\big|
\le
\frac{1}{c_w}\,\|f(x)-\mu_{t-1}(x)\|_\infty.
\end{equation}

For the second line, apply Lipschitzness in $w$ \eqref{eq:Lip-w} with $\| \mu_{t-1}(x)\|_\infty\le B_y$:
\[
\big|U(\mu_{t-1}(x),w_k^\star)-U(\mu_{t-1}(x),\hat w_{\pi_t(k),t})\big|
\le
\frac{B_y}{c_w^2}\,\|\hat w_{\pi_t(k),t}-w_k^\star\|_1.
\]
Thus the second line is bounded by $(B_y/c_w^2)\Delta_t^w$.

For the third line, use $|U(\mu_{t-1}(x),\hat w_{k,t})|
\le \|\mu_{t-1}(x)\|_\infty/c_w \le B_y/c_w$ to obtain
\begin{equation}
\label{eq:termC}
\left|\sum_{k=1}^{K}(\eta_k^\star-\hat\eta_{k,t})\,U(\mu_{t-1}(x),\hat w_{k,t})\right|
\le
\frac{B_y}{c_w}\,\|\hat\eta_t-\eta^\star\|_1
=
\frac{B_y}{c_w}\,\Delta_t^\eta.
\end{equation}

Combining \eqref{eq:mismatch-decomp}, \eqref{eq:termA} and \eqref{eq:termC}, we conclude that for all $x$,
\begin{equation}
\label{eq:mismatch-bound-final}
\big|\mathcal U^\star(x)-\widehat{\mathcal U}_t(x)\big|
\le
\frac{1}{c_w}\,\|f(x)-\mu_{t-1}(x)\|_\infty
+
\frac{B_y}{c_w^2}\Delta_t^w
+
\frac{B_y}{c_w}\Delta_t^\eta.
\end{equation}

\textbf{Convert to simple regret}
Apply \eqref{eq:mismatch-bound-final} at $x=x^\star$ and $x=x_t$ in \eqref{eq:two-mismatch}:
\begin{equation}
\label{eq:per-round-bound}
\begin{aligned}
\mathcal U^\star(x^\star)-\mathcal U^\star(x_t)
\le\;&
\frac{1}{c_w}
\sum_{z\in\{x^\star,x_t\}}
\|f(z)-\mu_{t-1}(z)\|_\infty
\\
&+
\frac{2B_y}{c_w^2}\Delta_t^w
+
\frac{2B_y}{c_w}\Delta_t^\eta
+
\varepsilon_t .
\end{aligned}
\end{equation}

Now take the minimum over $t\le T$ on the right-hand side. Since
$R_T = \min_{t\le T}\big(\mathcal U^\star(x^\star)-\mathcal U^\star(x_t)\big)$ and
$\max_{t\le T}\varepsilon_t$ upper bounds $\varepsilon_t$, we obtain \eqref{eq:regret-decomp}. On the GP confidence event \eqref{eq:gp-per-obj}, the bound
\[
\|f(x)-\mu_{t-1}(x)\|_\infty
\le
\sqrt{\beta_t}\,\|\sigma_{t-1}(x)\|_\infty
\]
holds for all $x\in\mathcal X$, including $x^\star$.

\paragraph{Existence-type GP rate}
On the confidence event \eqref{eq:gp-per-obj},
\[
\|f(x_t)-\mu_{t-1}(x_t)\|_\infty
\le
\sqrt{\beta_t}\,\|\sigma_{t-1}(x_t)\|_\infty
\le
\sqrt{\beta_T}\,\|\sigma_{t-1}(x_t)\|_\infty.
\]
Under standard variance-sum bounds for GP regression (e.g., Lemma X),
\[
\sum_{t=1}^T \|\sigma_{t-1}(x_t)\|_\infty^2
\le C\,\gamma_T,
\]
which implies
\[
\min_{t\le T} \|\sigma_{t-1}(x_t)\|_\infty
\le
\frac1T\sum_{t=1}^T \|\sigma_{t-1}(x_t)\|_\infty
\le
\sqrt{\frac{C\,\gamma_T}{T}}.
\]

\end{proof}

\section{Limitations and Future Work} The proposed method evaluates the proposed framework in simulated settings and on a real-world process design dataset, but does not include human-subject experiments. Due to time and resource constraints, validation with real decision makers was beyond the scope of the current study and remains an important direction for future work. We also plan to test the method on a broader range of real-world multi-objective datasets to assess generalization across domains. Finally, extending the framework to more adaptive mixture structures instead of stationary latent archetypes could further improve flexibility and practical applicability.



\end{document}